\documentclass[shortpaper]{clv3}
\pdfoutput=1

\newcommand{\USETIKZ}{}



\usepackage{amsmath}



\usepackage{algorithmic}
\usepackage[linesnumbered,titlenumbered,ruled,vlined,resetcount,algosection]{algorithm2e}
\usepackage{xpatch}
\xpretocmd{\algorithmic}{}{}{}

\usepackage[usenames,dvipsnames]{xcolor}
\definecolor{darkblue}{rgb}{0, 0, 0.5}
\usepackage{booktabs}
\usepackage{float}

\usepackage[normalem]{ulem}

\usepackage{xparse}
\usepackage{xifthen} 

\usepackage[labelformat=simple]{subcaption}

\usepackage{todonotes}

\usepackage{hyperref}
\hypersetup{colorlinks=true,citecolor=darkblue, linkcolor=darkblue, urlcolor=darkblue}

\bibliographystyle{compling}

\ifdefined\USETIKZ
\usepackage{tikz}
\fi


\DeclareMathOperator*{\OPTIM}{opt}
\newcommand \fixed {*}





\NewDocumentCommand \prob { m } 		{\mathbb{P}\left(#1\right)}

\NewDocumentCommand \eprob { m m }	{\mathbb{P}_{#1}\left(#2\right)}
\NewDocumentCommand \rprob { m }	{\eprob{\mathrm{pr}}{#1}}

\NewDocumentCommand \expe { m }  		{\mathbb{E}\left[#1\right]}

\NewDocumentCommand \eexpe { m m }	{\mathbb{E}_{#1}\left[#2\right]}
\NewDocumentCommand \eexpeapprox { m m }	{\tilde{\mathbb{E}}_{#1}\left[#2\right]}
\NewDocumentCommand \rexpe { m }	{\eexpe{\mathrm{pr}}{#1}}
\NewDocumentCommand \rexpeapprox { m }	{\eexpeapprox{\mathrm{pr}}{#1}}
\NewDocumentCommand \lexpe { m }	{\eexpe{\mathrm{pl}}{#1}}

\NewDocumentCommand \rcondexpe { m m }	{\rexpe{#1 \mid #2}}

\NewDocumentCommand \var { m }  	{\mathbb{V}\left[#1\right]}
\NewDocumentCommand \evar { m m }	{\mathbb{V}_{#1}\left[#2\right]}
\NewDocumentCommand \rvar { m }		{\evar{\mathrm{r}}{#1}}

\NewDocumentCommand \nats { } {\mathbb{N}}


\NewDocumentCommand \bigO { m } {O(#1)}


\NewDocumentCommand \numsamples { } {R}




\NewDocumentCommand \Sn { } {\mathcal{S}_n}
\NewDocumentCommand \Snh { } {\mathcal{S}_n^h}

\NewDocumentCommand \Snl { } {\mathcal{S}_n^l}
\NewDocumentCommand \Ln { } {\mathcal{L}_n}
\NewDocumentCommand \LnO { } {\mathcal{L}_n^0}
\NewDocumentCommand \Lnk { } {\mathcal{L}_n^k}
\NewDocumentCommand \Qn { } {\mathcal{Q}_n}
\NewDocumentCommand \Qnh { } {\mathcal{Q}_n^h}
\NewDocumentCommand \Qne { } {\mathcal{Q}_n^e}
\NewDocumentCommand \Qnb { } {\mathcal{Q}_n^b}
\NewDocumentCommand \Qnhl { } {\mathcal{Q}_n^{hl}}

\NewDocumentCommand \projective {} {\mathrm{pr}}

\NewDocumentCommand \Root { } {r}
\NewDocumentCommand \Pos { } {p}

\NewDocumentCommand \Ftree { O{T} }{ #1 }
\NewDocumentCommand \Rtree { O{\Root} O{\Ftree} }{ #2^{#1} }
\NewDocumentCommand \SubRtree { m O{\Root} O{\Ftree} }{ #3_{#1}^{#2} }

\NewDocumentCommand \Nvert { m } { n_{#1} }
\NewDocumentCommand \RNvert { } { \Nvert{\Root} }

\NewDocumentCommand \degree { m } {d_{#1}}
\NewDocumentCommand \Rdegree { } {\degree{\Root}}
\NewDocumentCommand \longdeg { m O{\Rtree} } { d_{#2}(#1) }

\NewDocumentCommand \neighs { m } {\Gamma_{#1}}
\NewDocumentCommand \Rneighs {} {\neighs{\Root}}
\NewDocumentCommand \longneighs { m O{\Rtree} } { \Gamma_{#2}(#1) }

\NewDocumentCommand \arr { } { \pi }
\NewDocumentCommand \invarr { } { \arr^{-1} }

\NewDocumentCommand \lenedge { m O{\arr} } {
	\ifthenelse{\isempty{#2}}
    {\delta_{#1}}
    {\delta_{#1}(#2)}
}
\NewDocumentCommand \lenedgep { m O{\arr} } { 
	\ifthenelse{\isempty{#2}}
    {\delta_{#1}^{\fixed}}
    {\delta_{#1}^{\fixed}(#2)}
}

\NewDocumentCommand \Dsymbol {} { D }
\NewDocumentCommand \D { m O{\arr} } {
	\Dsymbol_{#2}(#1)
}
\NewDocumentCommand \Dp { m O{\arr} } { 
	\Dsymbol_{#2}^{\fixed}(#1)
}

\NewDocumentCommand \anchor { m O{\arr} } {
	\ifthenelse{\isempty{#2}}
    {\alpha_{#1}}
    {\alpha_{#1}(#2)}
}
\NewDocumentCommand \coanchor { m O{\arr} } {
	\ifthenelse{\isempty{#2}}
    {\beta_{#1}}
    {\beta_{#1}(#2)}
}
\NewDocumentCommand \lengthsegment { m m O{\arr} } {
	\ifthenelse{\isempty{#3}}
    {\varphi_{#1}^{(#2)}}
    {\varphi_{#1,#3}(#2)}
}

\NewDocumentCommand \Unc { O{\Ftree} }{ \mathbf{P}(#1) }
\NewDocumentCommand \NUnc { O{\Ftree} }{ \mathbf{N}(#1) }
\NewDocumentCommand \Proj { O{\Rtree} }{ \mathbf{P_{\projective}}(#1) }
\NewDocumentCommand \NProj { O{\Rtree} }{ \mathbf{N_{\projective}}(#1) }
\NewDocumentCommand \DProj { O{\Rtree} }{ \mathbf{D_{\projective}}(#1) }
\NewDocumentCommand \DpProj { O{\Rtree} }{ \mathbf{D_{\projective}^{\fixed}}(#1) }

\NewDocumentCommand \PProj { O{\Pos} O{\Rtree} } { \mathbf{P_{\projective}}(#2;#1) }
\NewDocumentCommand \PNProj { O{\Pos} O{\Rtree} } { \mathbf{N_{\projective}}(#2;#1) }


\NewDocumentCommand \Vd { m } { \lenedge{#1}[] }

\NewDocumentCommand \VD { m } { \D{#1}[] }
\NewDocumentCommand \VDp { m } { \Dp{#1}[] }

\NewDocumentCommand \Vanchor { m } { \anchor{#1}[] }
\NewDocumentCommand \Vcoanchor { m } { \coanchor{#1}[] }
\NewDocumentCommand \Vlengthsegment { m m } { \lengthsegment{#1}{#2}[] }

\NewDocumentCommand \Xu { m O{} } {
	\ifthenelse{\isempty{#2}}
    {X_{#1}}
    {X_{#1}(#2)}
}
\NewDocumentCommand \Zk { m O{} } {
	\ifthenelse{\isempty{#2}}
    {Z_{#1}}
    {Z_{#1}(#2)}
}
\NewDocumentCommand \Sru { O{\Root} O{u} } { S(#1, #2) }
\NewDocumentCommand \SXru { O{\Root} O{u} } { S(\Xu{#1}, \Xu{#2}) }

\NewDocumentCommand \ExpeDProj { O{\Rtree} } { \rexpe{\VD{#1}} }
\NewDocumentCommand \ExpeDProjApprox { O{\Rtree} } { \rexpeapprox{\VD{#1}} }
\NewDocumentCommand \ExpeDpProj { O{\Rtree} } { \rexpe{\VDp{#1}} }
\NewDocumentCommand \ExpeDPlan { O{\Ftree} } { \lexpe{\VD{#1}} }
\NewDocumentCommand \VarDProj { O{\Rtree} } { \rvar{\VD{#1}} }
\NewDocumentCommand \ExpeDUnc { O{\Ftree} } { \expe{\VD{#1}} }
\NewDocumentCommand \ExpeDpUnc { O{\Ftree} } { \expe{\VDp{#1}} }
\NewDocumentCommand \VarDUnc { O{\Ftree} } { \var{\VD{#1}} }

\NewDocumentCommand \gDmin { m O{\Ftree} } { m_{#1}\left[ \Dsymbol(#2) \right] }
\NewDocumentCommand \Dmin { O{\Ftree} } { \gDmin{}[#1] }
\NewDocumentCommand \DminProj { O{\Rtree} } { \gDmin{\projective}[#1] }


\ifdefined\USETIKZ

\usetikzlibrary{arrows.meta,patterns}
\usetikzlibrary{ipe} 

\tikzstyle{ipe stylesheet} = [
  ipe import,
  even odd rule,
  line join=round,
  line cap=butt,
  ipe pen normal/.style={line width=0.4},
  ipe pen heavier/.style={line width=0.8},
  ipe pen fat/.style={line width=1.2},
  ipe pen ultrafat/.style={line width=2},
  ipe pen normal,
  ipe mark normal/.style={ipe mark scale=3},
  ipe mark large/.style={ipe mark scale=5},
  ipe mark small/.style={ipe mark scale=2},
  ipe mark tiny/.style={ipe mark scale=1.1},
  ipe mark normal,
  /pgf/arrow keys/.cd,
  ipe arrow normal/.style={scale=7},
  ipe arrow large/.style={scale=10},
  ipe arrow small/.style={scale=5},
  ipe arrow tiny/.style={scale=3},
  ipe arrow normal,
  /tikz/.cd,
  ipe arrows, 
  <->/.tip = ipe normal,
  ipe dash normal/.style={dash pattern=},
  ipe dash dotted/.style={dash pattern=on 1bp off 3bp},
  ipe dash dashed/.style={dash pattern=on 4bp off 4bp},
  ipe dash dash dotted/.style={dash pattern=on 4bp off 2bp on 1bp off 2bp},
  ipe dash dash dot dotted/.style={dash pattern=on 4bp off 2bp on 1bp off 2bp on 1bp off 2bp},
  ipe dash normal,
  ipe node/.append style={font=\normalsize},
  ipe stretch normal/.style={ipe node stretch=1},
  ipe stretch normal,
  ipe opacity 10/.style={opacity=0.1},
  ipe opacity 30/.style={opacity=0.3},
  ipe opacity 50/.style={opacity=0.5},
  ipe opacity 75/.style={opacity=0.75},
  ipe opacity opaque/.style={opacity=1},
  ipe opacity opaque,
]

\definecolor{red}{rgb}{1,0,0}
\definecolor{blue}{rgb}{0,0,1}
\definecolor{green}{rgb}{0,1,0}
\definecolor{yellow}{rgb}{1,1,0}
\definecolor{orange}{rgb}{1,0.647,0}
\definecolor{gold}{rgb}{1,0.843,0}
\definecolor{purple}{rgb}{0.627,0.125,0.941}
\definecolor{gray}{rgb}{0.745,0.745,0.745}
\definecolor{brown}{rgb}{0.647,0.165,0.165}
\definecolor{navy}{rgb}{0,0,0.502}
\definecolor{pink}{rgb}{1,0.753,0.796}
\definecolor{seagreen}{rgb}{0.18,0.545,0.341}
\definecolor{turquoise}{rgb}{0.251,0.878,0.816}
\definecolor{violet}{rgb}{0.933,0.51,0.933}
\definecolor{darkblue}{rgb}{0,0,0.545}
\definecolor{darkcyan}{rgb}{0,0.545,0.545}
\definecolor{darkgray}{rgb}{0.663,0.663,0.663}
\definecolor{darkgreen}{rgb}{0,0.392,0}
\definecolor{darkmagenta}{rgb}{0.545,0,0.545}
\definecolor{darkorange}{rgb}{1,0.549,0}
\definecolor{darkred}{rgb}{0.545,0,0}
\definecolor{lightblue}{rgb}{0.678,0.847,0.902}
\definecolor{lightcyan}{rgb}{0.878,1,1}
\definecolor{lightgray}{rgb}{0.827,0.827,0.827}
\definecolor{lightgreen}{rgb}{0.565,0.933,0.565}
\definecolor{lightyellow}{rgb}{1,1,0.878}
\definecolor{black}{rgb}{0,0,0}
\definecolor{white}{rgb}{1,1,1}

\fi

\issue{xx}{yy}{2022}
\runningtitle{The sum of edge lengths in random projective linearizations}
\runningauthor{Alemany-Puig \& Ferrer-i-Cancho}

\historydates{Action editor: Giorgo Satta; submission received: 7 July 2021; revised version received: 24 November 2021; accepted for publication: 18 January 2022}

\begin{document}
\allowdisplaybreaks

\title{Linear-time calculation of the expected sum of edge lengths in random projective linearizations of trees}

\author{
	Llu\'is Alemany-Puig
	\thanks{
		E-mail: lluis.alemany.puig@upc.edu.
	}
}
\affil{
	Universitat Polit\`ecnica de Catalunya, Barcelona, Catalonia, Spain.
}

\author{
	Ramon Ferrer-i-Cancho
	\thanks{
		E-mail: rferrericancho@cs.upc.edu.
	}
}
\affil{
	Universitat Polit\`ecnica de Catalunya, Barcelona, Catalonia, Spain.
}

\maketitle

\begin{abstract}
The syntactic structure of a sentence is often represented using syntactic dependency trees. The sum of the distances between syntactically related words has been in the limelight for the past decades. Research on dependency distances led to the formulation of the principle of dependency distance minimization whereby words in sentences are ordered so as to minimize that sum. Numerous random baselines have been defined to carry out related quantitative studies on languages. The simplest random baseline is the expected value of the sum in unconstrained random permutations of the words in the sentence, namely when all the shufflings of the words of a sentence are allowed and equally likely. Here we focus on a popular baseline: random projective permutations of the words of the sentence, that is, permutations where the syntactic dependency structure is projective, a formal constraint that sentences satisfy often in languages. Thus far, the expectation of the sum of dependency distances in random projective shufflings of a sentence has been estimated approximately with a Monte Carlo procedure whose cost is of the order of $\numsamples n$, where $n$ is the number of words of the sentence and $\numsamples$ is the number of samples; it is well known that the larger $\numsamples$, the lower the error of the estimation but the larger the time cost. Here we present formulae to compute that expectation without error in time of the order of $n$. Furthermore, we show that star trees maximize it, and give an algorithm to retrieve the trees that minimize it.
\end{abstract}

\section{Introduction}
\label{sec:introduction}

A successful way to represent the syntactic structure of a sentence is a dependency graph \cite{Nivre2006a} which relates the words of a sentence by pairing them with syntactic links as in Figure \ref{fig:example:dependency_tree}. Each link is directed and the arrow points from the {\em head word} to the {\em dependent word} (Figure \ref{fig:example:dependency_tree}). There are several conditions that are often imposed on the structure of dependency graphs \cite{Nivre2006a}. The first is {\em well-formedness}, namely, the graph is (weakly) connected. The second is {\em single-headedness}, that is, every word has at most one head. Another condition is {\em acyclicity}, that is, if two words, say $w_i$ and $w_j$, are connected via following one or more directed links from $w_i$ to $w_j$ then there is no path of directed links from $w_j$ to $w_i$. By definition, syntactic dependency trees always have a root vertex, that is, a vertex (word) with no head. The fourth condition is {\em projectivity}: often informally described as the situation where edges do not cross when drawn above the sentence and the root is not covered by any edge.

When a dependency graph is well-formed, single-headed and acyclic, the graph is a directed tree, called syntactic dependency tree \cite{Kuhlmann2006a,Gomez2011a}. In addition, a syntactic dependency structure is projective if, for every vertex $v$, all vertices reachable from $v$, that is, the yield of $v$, form a continuous substring within the linear ordering of the sentence \cite{Kuhlmann2006a}. Equivalently, a syntactic dependency structure is projective if the yield of each vertex of the tree forms a contiguous interval of positions in the linear ordering of the vertices. \citet{Kuhlmann2006a} define an interval (with endpoints $i$ and $j$) as the set $[i,j] = \{k \;|\; i\le k \text{ and } k\le j\}$.

A linear arrangement of a graph is planar if it does not have edge crossings \cite{Sleator1993a,Kuhlmann2006a}. Then projectivity can be characterized as a combination of two properties: planarity and the fact that the root is not covered \cite{Melcuk1988a}. Planarity was, to the best of our knowledge, first thought of as {\em one-page embeddings} of trees by \citet{Bernhart1974a}. Figure \ref{fig:example:dependency_tree} shows an example of a projective tree \ref{fig:example:dependency_tree}(a), a planar tree \ref{fig:example:dependency_tree}(b), and a non-planar tree \ref{fig:example:dependency_tree}(c) (see \citet{Bodirsky2005a} for further characterizations of syntactic dependency structures).

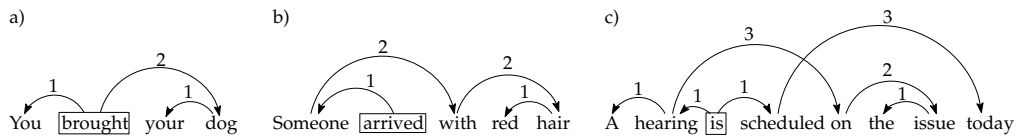
\begin{figure}
	\centering
\scalebox{0.71}{
\begin{tikzpicture}[ipe stylesheet]
  \node[ipe node]
     at (360, 752) {A};
  \node[ipe node]
     at (376, 752) {hearing
};
  \node[ipe node]
     at (416, 752) {is};
  \node[ipe node]
     at (432, 752) {scheduled};
  \node[ipe node]
     at (480, 752) {on};
  \node[ipe node]
     at (500, 752) {the};
  \node[ipe node]
     at (524, 752) {issue};
  \node[ipe node]
     at (552, 752) {today};
  \draw[-{ipe pointed[ipe arrow small]}]
    (392, 760)
     arc[start angle=26.5651, end angle=153.4349, radius=15.6525];
  \draw[-{ipe pointed[ipe arrow small]}]
    (416, 760)
     arc[start angle=45, end angle=135, radius=11.3137];
  \draw[-{ipe pointed[ipe arrow small]}]
    (420, 760)
     arc[start angle=-153.4349, end angle=-26.5651, x radius=15.6525, y radius=-15.6525];
  \draw[-{ipe pointed[ipe arrow small]}]
    (396, 760)
     arc[start angle=-169.6952, end angle=-10.3048, x radius=44.7214, y radius=-44.7214];
  \draw[-{ipe pointed[ipe arrow small]}]
    (452, 760)
     arc[start angle=-171.8931, end angle=-8.7462, x radius=54.5451, y radius=-54.5451];
  \draw[-{ipe pointed[ipe arrow small]}]
    (488, 760)
     arc[start angle=-161.5651, end angle=-18.4349, x radius=25.2982, y radius=-25.2982];
  \draw[-{ipe pointed[ipe arrow small]}]
    (532, 760)
     arc[start angle=33.6901, end angle=146.3099, radius=14.4222];
  \node[ipe node]
     at (184, 752) {Someone};
  \node[ipe node]
     at (232, 752) {arrived};
  \node[ipe node]
     at (272, 752) {with};
  \node[ipe node]
     at (300, 752) {red};
  \node[ipe node]
     at (324, 752) {hair};
  \draw[-{ipe pointed[ipe arrow small]}]
    (248, 760)
     arc[start angle=21.8014, end angle=158.1986, radius=21.5407];
  \draw[-{ipe pointed[ipe arrow small]}]
    (203.9996, 760.0001)
     arc[start angle=-167.4725, end angle=-12.6643, x radius=39.0306, y radius=-39.0306];
  \draw[-{ipe pointed[ipe arrow small]}]
    (281.6044, 760.2802)
     arc[start angle=-163.5349, end angle=-18.4349, x radius=28.3178, y radius=-28.3178];
  \draw[-{ipe pointed[ipe arrow small]}]
    (332, 760)
     arc[start angle=33.6901, end angle=153.4349, radius=14.4222];
  \node[ipe node]
     at (44, 752) {You};
  \node[ipe node]
     at (72, 752) {brought};
  \node[ipe node]
     at (116, 752) {your};
  \node[ipe node]
     at (148, 752) {dog};
  \draw[-{ipe pointed[ipe arrow small]}]
    (83.6369, 760.6594)
     arc[start angle=28.9531, end angle=153.4349, radius=17.8796];
  \draw[-{ipe pointed[ipe arrow small]}]
    (152, 760)
     arc[start angle=33.6901, end angle=153.4349, radius=14.4222];
  \draw[-{ipe pointed[ipe arrow small]}]
    (92.2363, 760.6032)
     arc[start angle=-164.56, end angle=-12.5288, x radius=32.599, y radius=-32.599];
  \draw
    (230, 760) rectangle (265, 750);
  \draw
    (70.3206, 760.691) rectangle (107.487, 749.297);
  \node[ipe node]
     at (375.43, 770.119) {1};
  \node[ipe node]
     at (406.96, 766.239) {1};
  \node[ipe node]
     at (431.699, 771.09) {1};
  \node[ipe node]
     at (233.385, 774.455) {1};
  \node[ipe node]
     at (315.121, 768.149) {1};
  \node[ipe node]
     at (136.557, 768.118) {1};
  \node[ipe node]
     at (515.618, 767.451) {1};
  \node[ipe node]
     at (507.614, 779.821) {2};
  \node[ipe node]
     at (306.147, 783.671) {2};
  \node[ipe node]
     at (122.005, 785.096) {2};
  \node[ipe node]
     at (65.008, 771.756) {1};
  \node[ipe node]
     at (239.934, 792.16) {2};
  \node[ipe node]
     at (434.124, 799.224) {3};
  \node[ipe node]
     at (507.856, 807.956) {3};
  \node[ipe node]
     at (44, 808) {a)};
  \node[ipe node]
     at (184, 808) {b)};
  \node[ipe node]
     at (360, 808) {c)};
  \draw
    (413.746, 760.453) rectangle (424.354, 749.013);
\end{tikzpicture}
}
	\caption{Examples of sentences and their syntactic dependency structure. Here arc labels indicate dependency distances. The word within a rectangle is the root of the sentence, and the number on top of each edge denotes its length. a) Projective dependency tree (adapted from \citet{Gross2009a}). b) Planar (but not projective) syntactic dependency structure (adapted from \citet{Gross2009a}). c) Non-projective syntactic dependency structure (adapted from \cite{Nivre2009a}).}
	\label{fig:example:dependency_tree}
\end{figure}

A free tree $\Ftree=(V,E)$ is an undirected acyclic graph (Figure \ref{fig:ftree_rtree_linarr}(a)), where $V$ is the set of vertices and $E$ is the set of edges. Here we represent the syntactic dependency structure of a sentence as a pair consisting of a rooted tree and a linear arrangement of its vertices. A rooted tree $\Rtree=(V,E;\Root)$ is a free tree $\Ftree=(V,E)$ with one of its vertices, say $\Root\in V$, labeled as its root and with the edges oriented from $\Root$ towards the leaves (Figure \ref{fig:ftree_rtree_linarr}(b)). A linear arrangement $\arr$ (also called {\em embedding}) of an $n$-vertex graph $G=(V,E)$ is a (bijective) function that assigns every vertex $u\in V$ to a position $\arr(u)$. Throughout this article, we use the terms `linear arrangement', `linear ordering', `arrangement', `linearization' interchangeably. In addition, we assume that $\arr(u)\in [n] = \{1,\cdots,n\}$. Linear arrangements are often seen as determined by the labeling of the vertices \cite{Chung1984a,Kuhlmann2006a}, but here we consider that the labeling of a graph and $\arr$ are independent. In order to clarify our notion of linear arrangement of a labeled graph $G$, we say that two linear arrangements $\arr_1$ and $\arr_2$ of $G$ are equal if and only if, for every vertex $u\in V$, it holds that $\arr_1(u)=\arr_2(u)$.

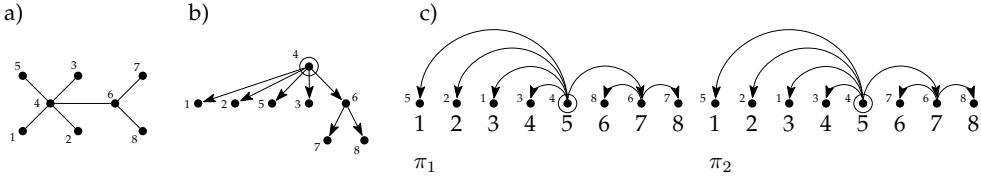
\begin{figure}
	\centering
\scalebox{0.87}{
\begin{tikzpicture}[ipe stylesheet]
  \pic
     at (100, 780) {ipe disk};
  \pic
     at (112, 768) {ipe disk};
  \pic
     at (140, 768) {ipe disk};
  \pic
     at (124, 756) {ipe disk};
  \pic
     at (100, 756) {ipe disk};
  \pic
     at (124, 780) {ipe disk};
  \pic
     at (152, 756) {ipe disk};
  \pic
     at (152, 780) {ipe disk};
  \draw
    (100, 780)
     -- (112, 768)
     -- (124, 780);
  \draw
    (100, 756)
     -- (112, 768)
     -- (124, 756);
  \draw
    (112, 768)
     -- (140, 768);
  \draw
    (140, 768)
     -- (152, 780);
  \draw
    (140, 768)
     -- (152, 756);
  \pic
     at (224, 784) {ipe disk};
  \pic
     at (176, 768) {ipe disk};
  \pic
     at (192, 768) {ipe disk};
  \pic
     at (208, 768) {ipe disk};
  \pic
     at (224, 768) {ipe disk};
  \pic
     at (240, 768) {ipe disk};
  \pic
     at (232, 752) {ipe disk};
  \pic
     at (248, 752) {ipe disk};
  \draw[-{ipe pointed[ipe arrow small]}]
    (224, 784)
     -- (178.3735, 768.7552);
  \draw[-{ipe pointed[ipe arrow small]}]
    (224, 784)
     -- (194.3945, 769.1249);
  \draw[-{ipe pointed[ipe arrow small]}]
    (224, 784)
     -- (209.7994, 769.3714);
  \draw[-{ipe pointed[ipe arrow small]}]
    (224, 784)
     -- (224.2184, 770.3573);
  \draw[-{ipe pointed[ipe arrow small]}]
    (224, 784)
     -- (238.1444, 769.6178);
  \draw[-{ipe pointed[ipe arrow small]}]
    (240, 768)
     -- (232.9683, 754.0897);
  \draw[-{ipe pointed[ipe arrow small]}]
    (240, 768)
     -- (247.0176, 754.213);
  \node[ipe node, font=\tiny]
     at (94.402, 751.52) {1};
  \node[ipe node, font=\tiny]
     at (118.722, 750.882) {2};
  \node[ipe node, font=\tiny]
     at (120.961, 783.039) {3};
  \node[ipe node, font=\tiny]
     at (104.963, 766.402) {4};
  \node[ipe node, font=\tiny]
     at (96.643, 782.559) {5};
  \node[ipe node, font=\tiny]
     at (136.801, 770.559) {6};
  \node[ipe node, font=\tiny]
     at (148.32, 782.239) {7};
  \node[ipe node, font=\tiny]
     at (147.202, 750.722) {8};
  \node[ipe node, font=\tiny]
     at (216.961, 787.68) {4};
  \node[ipe node, font=\tiny]
     at (170.079, 765.927) {1};
  \node[ipe node, font=\tiny]
     at (186.079, 765.927) {2};
  \node[ipe node, font=\tiny]
     at (202.079, 765.927) {5};
  \node[ipe node, font=\tiny]
     at (218.079, 765.927) {3};
  \node[ipe node, font=\tiny]
     at (242.719, 768.961) {6};
  \node[ipe node, font=\tiny]
     at (226.399, 747.205) {7};
  \node[ipe node, font=\tiny]
     at (243.517, 746.244) {8};
  \draw
    (224, 784) circle[radius=4];
  \pic
     at (272, 768) {ipe disk};
  \pic
     at (288, 768) {ipe disk};
  \pic
     at (304, 768) {ipe disk};
  \pic
     at (320, 768) {ipe disk};
  \pic
     at (336, 768) {ipe disk};
  \pic
     at (352, 768) {ipe disk};
  \pic
     at (368, 768) {ipe disk};
  \pic
     at (384, 768) {ipe disk};
  \node[ipe node]
     at (333.404, 756) {5};
  \node[ipe node]
     at (285.404, 756) {2};
  \node[ipe node]
     at (269.527, 756.123) {1};
  \node[ipe node]
     at (301.404, 756) {3};
  \node[ipe node]
     at (365.404, 756) {7};
  \node[ipe node]
     at (381.404, 756) {8};
  \node[ipe node]
     at (349.404, 756) {6};
  \node[ipe node]
     at (317.404, 756) {4};
  \draw
    (336, 768) circle[radius=4];
  \draw[-{ipe pointed[ipe arrow small]}]
    (336, 768)
     arc[start angle=0, end angle=162.7315, radius=8.1641];
  \draw[-{ipe pointed[ipe arrow small]}]
    (336, 768)
     arc[start angle=0, end angle=171.7404, radius=16.0121];
  \draw[-{ipe pointed[ipe arrow small]}]
    (336, 768)
     arc[start angle=0, end angle=173.9105, radius=24.0073];
  \draw[-{ipe pointed[ipe arrow small]}]
    (336, 768)
     arc[start angle=0, end angle=175.8767, radius=31.9915];
  \draw[-{ipe pointed[ipe arrow small]}]
    (336, 768)
     arc[start angle=180, end angle=350.9629, x radius=16.2137, y radius=-16.2137];
  \draw[-{ipe pointed[ipe arrow small]}]
    (368, 768)
     arc[start angle=-180, end angle=-16.3617, x radius=8.1657, y radius=-8.1657];
  \draw[-{ipe pointed[ipe arrow small]}]
    (368, 768)
     arc[start angle=0, end angle=163.6792, radius=8.1856];
  \node[ipe node]
     at (92, 804) {a)};
  \node[ipe node]
     at (172, 804) {b)};
  \node[ipe node]
     at (272, 804) {c)};
  \pic
     at (400, 768) {ipe disk};
  \pic
     at (416, 768) {ipe disk};
  \pic
     at (432, 768) {ipe disk};
  \pic
     at (448, 768) {ipe disk};
  \pic
     at (464, 768) {ipe disk};
  \pic
     at (480, 768) {ipe disk};
  \pic
     at (496, 768) {ipe disk};
  \pic
     at (512, 768) {ipe disk};
  \node[ipe node]
     at (397.404, 756) {1};
  \node[ipe node]
     at (413.404, 756) {2};
  \node[ipe node]
     at (461.404, 756) {5};
  \node[ipe node]
     at (429.404, 756) {3};
  \node[ipe node]
     at (493.404, 756) {7};
  \node[ipe node]
     at (509.404, 756) {8};
  \node[ipe node]
     at (477.404, 756) {6};
  \node[ipe node]
     at (445.404, 756) {4};
  \draw
    (464, 768) circle[radius=4];
  \node[ipe node]
     at (269.404, 740) {$\arr_1$};
  \node[ipe node]
     at (397.404, 740) {$\arr_2$};
  \node[ipe node, font=\tiny]
     at (298.323, 768.694) {1};
  \node[ipe node, font=\tiny]
     at (282.446, 768.817) {2};
  \node[ipe node, font=\tiny]
     at (265.46, 768.447) {5};
  \node[ipe node, font=\tiny]
     at (314.2, 768.57) {3};
  \node[ipe node, font=\tiny]
     at (377.953, 768.694) {7};
  \node[ipe node, font=\tiny]
     at (346.2, 768.447) {8};
  \node[ipe node, font=\tiny]
     at (362.077, 768.447) {6};
  \node[ipe node, font=\tiny]
     at (328.228, 768.694) {4};
  \node[ipe node, font=\tiny]
     at (426.323, 768.694) {1};
  \node[ipe node, font=\tiny]
     at (410.446, 768.817) {2};
  \node[ipe node, font=\tiny]
     at (393.46, 768.447) {5};
  \node[ipe node, font=\tiny]
     at (442.2, 768.57) {3};
  \node[ipe node, font=\tiny]
     at (473.953, 768.694) {7};
  \node[ipe node, font=\tiny]
     at (506.2, 768.447) {8};
  \node[ipe node, font=\tiny]
     at (490.077, 768.447) {6};
  \node[ipe node, font=\tiny]
     at (456.228, 768.694) {4};
  \draw[-{ipe pointed[ipe arrow small]}]
    (464, 768)
     arc[start angle=0, end angle=162.7315, radius=8.1641];
  \draw[-{ipe pointed[ipe arrow small]}]
    (464, 768)
     arc[start angle=0, end angle=171.7404, radius=16.0121];
  \draw[-{ipe pointed[ipe arrow small]}]
    (464, 768)
     arc[start angle=0, end angle=173.9105, radius=24.0073];
  \draw[-{ipe pointed[ipe arrow small]}]
    (464, 768)
     arc[start angle=0, end angle=175.8767, radius=31.9915];
  \draw[-{ipe pointed[ipe arrow small]}]
    (464, 768)
     arc[start angle=180, end angle=350.9629, x radius=16.2137, y radius=-16.2137];
  \draw[-{ipe pointed[ipe arrow small]}]
    (496, 768)
     arc[start angle=-180, end angle=-16.3617, x radius=8.1657, y radius=-8.1657];
  \draw[-{ipe pointed[ipe arrow small]}]
    (496, 768)
     arc[start angle=0, end angle=163.6792, radius=8.1856];
\end{tikzpicture}
}
	\caption{a) A free tree $\Ftree=(V,E)$. b) The tree $\Ftree$ rooted at $\Root=4$, yielding $\Rtree=(V,E;\Root)$ with $\Root=4$. c) Two different projective linear arrangements of $\Rtree$: $\arr_1(5)=\arr_2(5)=1$ (thus $\invarr_1(1)=\invarr_2(1)=5$); $\arr_1(8)=6$, $\arr_2(7)=6$.
	}
	\label{fig:ftree_rtree_linarr}
\end{figure}

In any linear arrangement $\arr$ of a graph $G=(V,E)$, one can define properties on the graph's edges and on the arrangement as a whole. The length of an edge between vertices $u$ and $v$ is their distance in the linear arrangement, usually defined as
\begin{equation}
\label{eq:edge_length}
\lenedge{uv} = |\arr(u) - \arr(v)|
\end{equation}
Thus, the length of an edge in the arrangement is the number of vertices between its endpoints plus one as in previous studies \cite{Iordanskii1987a, Shiloach1979a, Chung1984a, Hochberg2003a, Ferrer2004a, Gildea2007a, Gildea2010a, Ferrer2019a}. A less commonly used definition of edge length is \cite{Hudson1995a, Hiranuma1999a, Eppler2004a, Liu2017a}
\begin{equation}
\label{eq:edge_length_minus_1}
\lenedgep{uv} = |\arr(u) - \arr(v)| - 1
\end{equation}
Here we use $\D{G} = \sum_{uv\in E} \lenedge{uv}$ as the definition for the sum of edge lengths of $G$ when it is linearly arranged by $\arr$, but we also derive some results for $\Dp{G} = \sum_{uv\in E} \lenedgep{uv}$.

There exists sizable literature on the calculation of baselines for the sum of edge lengths on trees. These baselines are crucial for research on the Dependency Distance Minimization (DDm) principle \cite{Ferrer2004a,Liu2017a,Temperley2018a}. DDm was put forward by comparing actual dependency distances against a random baseline \cite{Ferrer2004a}. Concerning the computation of the minimum baseline, \citet{Iordanskii1987a}, and \citet{Hochberg2003a} independently devised an $O(n)$-time algorithm for planar (one-page) embedding of free trees. \citet{Gildea2007a} sketched an algorithm for projective embeddings of rooted trees. \citet{Alemany2022a} reviewed this problem and presented, to the best of our knowledge, the first $\bigO{n}$-time algorithm. Polynomial-time algorithms for unconstrained embeddings were presented by \citet{Shiloach1979a}, with complexity $\bigO{n^{2.2}}$, and later by \citet{Chung1984a}, with complexities $\bigO{n^2}$ and $\bigO{n^\lambda}$, where $\lambda$ is any real number satisfying $\lambda>\log 3/\log 2$. Concerning random baselines, the precursors are found in Z\"ornig's research on the distribution of the distance between repeats in a uniformly random arrangement of a sequence assuming that consecutive elements are at distance zero \cite{Zornig1984a} as in parallel research on syntactic dependency distances \cite{Hudson1995a, Hiranuma1999a, Eppler2004a, Liu2017a}. Later, \citet{Ferrer2004a,Ferrer2016a} studied the expectation of the random variable $\VD{\Ftree}$ defined as
\begin{equation}
\label{eq:sum_edge_lengths}
\VD{\Ftree} = \sum_{uv\in E}\Vd{uv}
\end{equation}
in uniformly random arrangements, where $\delta_{uv}$ is a random variable defined over uniformly random unconstrained linear arrangements of the tree $\Ftree$, resulting in
\begin{equation}
\label{eq:ExpeDUnconstrained:closed_form}
\ExpeDUnc = \frac{n^2 - 1}{3}
\end{equation}
Notice that $\ExpeDUnc$ does not depend on the topology of $\Ftree$.

While there are constant-time formulae for the expectation of $\VD{\Ftree}$ in unconstrained arrangements (Equation \ref{eq:ExpeDUnconstrained:closed_form}), a procedure to calculate the expected value of $\VD{\Ftree}$ under projectivity is not forthcoming. Our primary goal is to improve the calculation of the expected sum of edge lengths in uniformly random projective arrangements with respect to the Monte Carlo method or random sampling method put forward by \citet{Gildea2007a}. Such a widely used procedure \cite{Park2009a,Futrell2015a,Kramer2021a} estimates the expectation of $D$ of an $n$-vertex tree with an error that is negatively correlated with $\numsamples$, the amount of arrangements sampled, while its cost is directly proportional to that amount, that is $\bigO{\numsamples n}$. This raises the question of what would the minimum value of $\numsamples$ be to obtain accurate-enough estimations of the expectation of $\VD{\Ftree}$ in projective arrangements. In recent research \cite{Futrell2015a,Kramer2021a}, $\numsamples=10,\numsamples=100$ were used. Here we demonstrate that there is no need to answer this question since we provide formulae to calculate its exact value.

We improve upon these techniques by providing closed-form formulae for the expected value of $\VD{\Rtree}$ in uniformly random projective arrangements of $\Rtree$ that can be evaluated in $\bigO{n}$-time. More formally, our goal in this article is to find closed-form formulae for $\ExpeDProj$, the expectation of the random variable $\VD{\Rtree}$ conditioned to the set of projective arrangements, where the subscript `pr' indicates `projective linear arrangement'. Notice that $\ExpeDUnc$ in Equation \ref{eq:ExpeDUnconstrained:closed_form} has no subscript to indicate {\em unconstrained linear arrangement}. An unconstrained linear arrangement is one of the $n!$ possible orderings. $\ExpeDProj$ is a widely used random baseline for research on Dependency Distance Minimization \cite{Park2009a,Gildea2010a,Futrell2015a,Kramer2021a}.

The structure of this article is the following. We first derive, in Section \ref{sec:exp_D}, an arithmetic expression for $\ExpeDProj$, given by
\begin{theorem}
\label{thm:ExpeDProj}
Let $\Rtree=(V,E;\Root)$ be a tree rooted at $\Root\in V$. The expected sum of edge lengths $\VD{\Rtree}$ conditioned to uniformly random projective arrangements is
\begin{align}
\ExpeDProj
&=	\frac{\Rdegree(2\RNvert + 1) + \RNvert - 1}{6}
	+ \sum_{u\in\Rneighs} \ExpeDProj[\SubRtree{u}]
	\label{eq:ExpeDProj:recurrence} \\
&=	\frac{1}{6} \left( -1 + \sum_{v\in V} \Nvert{v} (2\degree{v} + 1) \right)
	\label{eq:ExpeDProj:closed_form}
\end{align}
where $\Nvert{u}$ denotes the number of vertices of the subtree $\SubRtree{u}$ rooted at $u\in V$, that is, $\Nvert{u}=|V(\SubRtree{u})|$, $\neighs{v}$ denotes the set of children of vertex $v$, and $\degree{v}=|\neighs{v}|$ is the out-degree of vertex $v$ in the rooted tree. If $\Rdegree=0$ then $\ExpeDProj=0$.
\end{theorem}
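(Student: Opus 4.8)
The plan is to prove the recurrence (Equation~\ref{eq:ExpeDProj:recurrence}) first and then unroll it into the closed form (Equation~\ref{eq:ExpeDProj:closed_form}). The starting point is the product structure of uniformly random projective arrangements: once the root $\Root$ is fixed, a projective arrangement is obtained by (i) choosing a linear order of the $\Rdegree+1$ ``super-items'' consisting of $\Root$ together with the $\Rdegree$ subtree blocks $\{\SubRtree{u}\}_{u\in\Rneighs}$, and (ii) choosing, independently, a projective arrangement internal to each block. Since the number of projective arrangements factorizes as a product over the vertices, the macro-order in (i) is a \emph{uniform} permutation of the $\Rdegree+1$ super-items, it is independent of the internal arrangements in (ii), and each internal arrangement is itself a uniformly random projective arrangement of the corresponding $\SubRtree{u}$. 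I would establish this factorization first, as everything else rests on it.

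With this structure in hand, I split $\VD{\Rtree}$ by linearity of expectation into the edges incident to $\Root$ and the edges lying inside the subtrees. Because the induced arrangement on each $\SubRtree{u}$ is uniformly projective and independent of the rest, the internal edges contribute exactly $\sum_{u\in\Rneighs}\ExpeDProj[\SubRtree{u}]$, which is precisely the recursive term. It then remains to show $\sum_{u\in\Rneighs}\rexpe{\Vd{\Root u}} = (\Rdegree(2\RNvert+1)+\RNvert-1)/6$.

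To evaluate a single $\rexpe{\Vd{\Root u}}$, I write $\Vd{\Root u} = 1 + \sum_{w} \mathbf{1}[w \text{ lies between } \Root \text{ and } u]$ and classify each vertex $w$. If $w$ belongs to a different block $\SubRtree{u'}$, then $w$ lies between $\Root$ and $u$ iff the whole block $\SubRtree{u'}$ falls between $\Root$ and $\SubRtree{u}$ in the macro-order; among the three super-items $\{\Root, \SubRtree{u}, \SubRtree{u'}\}$ this happens with probability $1/3$, contributing $(\RNvert-1-\Nvert{u})/3$. If $w$ lies in the same block as $u$, then whether it is counted depends on the internal position of $u$ and on which side of $\Root$ the block sits; the key point, and the slickest step, is that conditioning on the block lying to the left versus the right of $\Root$ (each with probability $1/2$ by reversal symmetry) makes the dependence on the internal position of $u$ cancel, leaving exactly $(\Nvert{u}-1)/2$ regardless of the internal distribution. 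Adding the constant $1$ and summing over $u\in\Rneighs$ using $\sum_{u\in\Rneighs}\Nvert{u}=\RNvert-1$ yields the first term of Equation~\ref{eq:ExpeDProj:recurrence}; the base case $\Rdegree=0$ is immediate.

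Finally, I unroll the recurrence over the whole tree: since each vertex $v$ acts as the root of its own subtree exactly once, telescoping gives $\ExpeDProj = \frac{1}{6}\sum_{v\in V}(\degree{v}(2\Nvert{v}+1)+\Nvert{v}-1)$, and regrouping together with the identity $\sum_{v\in V}\degree{v}=|V|-1$ (so that $\sum_v \degree{v}-\sum_v 1 = -1$) collapses this to Equation~\ref{eq:ExpeDProj:closed_form}. I expect the main obstacle to be the rigorous justification of the factorization and independence claimed in the first paragraph, together with the same-block cancellation; once those are in place, the remaining computations are routine.
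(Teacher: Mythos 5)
Your proposal is correct, and its skeleton coincides with the paper's: the same split of $\VD{\Rtree}$ into subtree-internal edges and root-incident edges, linearity of expectation to obtain the recursive term, a per-edge computation of $\rexpe{\Vd{\Root u}}$, and unfolding of the recurrence using $\sum_{v\in V}\degree{v}=\RNvert-1$ to reach Equation \ref{eq:ExpeDProj:closed_form}. Where you genuinely diverge is inside the per-edge step (the paper's Lemma \ref{lemma:ExpeProj:length_anchor_coanchor_edge}). Your in-block contribution is the paper's anchor argument verbatim: reversal symmetry gives probability $1/2$ for each side, and the conditional expectation of the relative position $q_u$ cancels, yielding $(\Nvert{u}+1)/2$ including the constant $1$. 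For the out-of-block contribution, however, the paper conditions on $s_{\Root u}$, the number of intermediate segments, derives $\rprob{s_{\Root u}=s}=2(\Rdegree-s)/((\Rdegree+1)\Rdegree)$, multiplies by an average intermediate-segment length, and evaluates $\sum_s s(\Rdegree-s)$; you instead sum indicators over vertices $w\in\SubRtree{u'}$ with $u'\neq u$ and note that in a uniform permutation of the $\Rdegree+1$ super-items the relative order of the three items $\Root$, $\SubRtree{u}$, $\SubRtree{u'}$ is uniform, so $\SubRtree{u'}$ falls in the middle with probability $1/3$, giving $(\RNvert-\Nvert{u}-1)/3$ directly. Your route is more elementary and sidesteps an exchangeability step the paper leaves implicit when it replaces $\rcondexpe{\Vcoanchor{\Root u}}{s_{\Root u}=s}$ by $s$ times the average segment length; the paper's route, in exchange, produces the distribution of the number of intervening segments, which has independent interest. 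You are also right to flag the factorization as the crux: the uniformity and mutual independence of the macro-order and of the internal arrangements is exactly what the paper secures, somewhat informally, through the segment characterization behind Proposition \ref{prop:NProj}, so proving it explicitly as you intend closes the one step the paper does not spell out.
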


Section \ref{sec:maxima_minima} characterizes the class of trees that maximize $\ExpeDProj$, detailed in Theorem \ref{thm:ExpeDProj_maximised_by_Snh}.
\begin{theorem}
\label{thm:ExpeDProj_maximised_by_Snh}
For any $n$-vertex rooted tree $\Rtree$, we have that $\ExpeDProj \le \ExpeDProj[\Snh]$ with equality if, and only if, $\Rtree=\Snh$, where $\Snh$ denotes the star tree of $n$ vertices.
\end{theorem}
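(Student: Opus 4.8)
The plan is to work entirely from the closed form \eqref{eq:ExpeDProj:closed_form}. Since $6\ExpeDProj+1=\sum_{v\in V}\Nvert{v}(2\degree{v}+1)$, maximising $\ExpeDProj$ is equivalent to maximising the integer quantity $g(\Rtree):=\sum_{v\in V}\Nvert{v}(2\degree{v}+1)$. A direct substitution gives $g(\Snh)=n(2(n-1)+1)+(n-1)=2n^2-1$ (whence $\ExpeDProj[\Snh]=(n^2-1)/3$), so the theorem reduces to the purely combinatorial claim
\[
g(\Rtree)\le 2n^2-1,\qquad\text{with equality iff }\Rtree=\Snh .
\]
I would prove this by strong induction on $n$, the base case $n=1$ being immediate.

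The key structural fact is that $g$ decomposes additively over the root's subtrees. Splitting the vertex sum into the root and the subtrees hanging from its children, and using that for any $v\in\SubRtree{u}$ the out-degree $\degree{v}$ and the subtree size $\Nvert{v}$ coincide whether measured in $\Rtree$ or in $\SubRtree{u}$, one obtains
\[
g(\Rtree)=2n\,\Rdegree+n+\sum_{u\in\Rneighs}g(\SubRtree{u}),
\]
which also follows from the recurrence \eqref{eq:ExpeDProj:recurrence}. Writing $d:=\Rdegree$ and letting $\Nvert{u}$ denote the sizes of the $d$ child subtrees, so that $\sum_{u\in\Rneighs}\Nvert{u}=n-1$ and each $\Nvert{u}\ge1$, the inductive hypothesis gives $g(\SubRtree{u})\le 2\Nvert{u}^2-1$, with equality iff each $\SubRtree{u}$ is a star rooted at its hub.

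Substituting these bounds yields $g(\Rtree)\le 2nd+n-d+2\sum_{u\in\Rneighs}\Nvert{u}^2$. For fixed $d$ and fixed total $n-1$, the sum of squares is maximised by concentrating the mass into a single subtree, giving $\sum_{u\in\Rneighs}\Nvert{u}^2\le (n-d)^2+(d-1)$. After simplification this produces
\[
g(\Rtree)\le 2n^2-1+(2d-1)\bigl(d-(n-1)\bigr),
\]
the crucial point being that the discriminant of the underlying quadratic in $d$ is the perfect square $(2n-3)^2$, so the correction term factors exactly as shown. Since $1\le d\le n-1$, the factor $2d-1$ is positive while $d-(n-1)\le0$, hence the correction is $\le0$ and the bound $g(\Rtree)\le 2n^2-1$ follows.

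For the equality case I would trace the three inequalities backwards. Equality forces $(2d-1)(d-(n-1))=0$, so $d=n-1$; then $\sum_{u\in\Rneighs}\Nvert{u}=n-1$ distributed over $n-1$ parts forces every $\Nvert{u}=1$, i.e.\ every child of the root is a leaf, which is precisely $\Snh$ (the inductive equality conditions then being vacuous), and conversely $\Snh$ attains the value by the computation above. The main obstacle is that the star does \emph{not} separately maximise either the depth-driven term $\sum_v\Nvert{v}$ or the branching-driven term $\sum_v\Nvert{v}\degree{v}$ — a rooted path beats it on the former, the star on the latter — so no term-by-term bound can succeed; the competition between a large root degree and deep subtrees has to be resolved globally, and it is the induction together with the exact factorisation of the quadratic in $d$ that makes the trade-off come out in favour of $\Snh$.
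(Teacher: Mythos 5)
Your proposal is correct and takes essentially the same route as the paper: induction on $n$ via the decomposition over the root's subtrees, the inductive hypothesis that each optimal subtree is a star rooted at its hub, and the same concentration bound $\sum_{i}\Nvert{i}^2\le(n-\Rdegree)^2+(\Rdegree-1)$, culminating in the same quadratic inequality in the root degree. The only (welcome) presentational differences are that you work with the integer $6\,\ExpeDProj+1$ and factor the slack exactly as $(2\Rdegree-1)\bigl(\Rdegree-(n-1)\bigr)$, which is manifestly nonpositive for $1\le\Rdegree\le n-1$ and hands you the equality case $\Rdegree=n-1$ (hence $\Rtree=\Snh$) directly, whereas the paper treats the star separately and verifies the resulting strict inequality over $0\le\Rdegree\le n-2$ by an endpoint check.
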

Then, a tight upper bound of $\ExpeDProj$ is given by $\ExpeDUnc$, as detailed in the next corollary.
\begin{corollary}
\label{cor:upper_bound_E_pr_D}
Given any $n$-vertex rooted tree $\Rtree=(V,E;\Root)$ rooted at $\Root\in V$, it holds that
\begin{equation}
\label{eq:tight_upper_bound_of_ExpeDProj}
\ExpeDProj \le \ExpeDProj[\Snh] = \ExpeDUnc = \frac{n^2 - 1}{3}
\end{equation}
where $\ExpeDUnc$ is the expected sum of edge lengths in uniformly random (unconstrained) linear arrangements (Equation \ref{eq:ExpeDUnconstrained:closed_form}) and $\Ftree$ is the free tree variant of $\Rtree$.
\end{corollary}
Theorem \ref{thm:ExpeDProj} and Corollary \ref{cor:upper_bound_E_pr_D} indicate that, for each $n$, a star tree rooted at its hub ($\Snh$) maximizes $\ExpeDProj$, achieving $(n^2 - 1)/3$. Section \ref{sec:maxima_minima} also shows that the minima can be calculated with a dynamic programming algorithm.

Section \ref{sec:exp_D:study_corpora} compares our new method to calculate $\ExpeDProj$ exactly against the Monte Carlo estimation method using dependency treebanks and find that commonly used values of $\numsamples$ can yield a large relative error in the estimation on a single tree. This new method is available in the Linear Arrangement Library \cite{Alemany2021d}. We finally present some conclusions and propose future work in Section \ref{sec:conclusions}.
\section{Expected sum of edge lengths}
\label{sec:exp_D}

We devote this section to characterize projective arrangements (Section \ref{sec:exp_D:N_projective}) and to derive an arithmetic expression to calculate the sum of expected edge lengths in said arrangements (Section \ref{sec:exp_D:exp_sum}). We end this section with some instantiations of said expression for particular classes of trees (Section \ref{sec:exp_D:tree_classes}).

\subsection{The number of random projective arrangements}
\label{sec:exp_D:N_projective}

The number of unconstrained arrangements of an $n$-vertex tree $\Ftree$ is $\NUnc=|\Unc|=n!$, where $\Unc$ denotes the set of all $n!$ arrangements of $\Ftree$, hence $\NUnc$ is independent from the tree structure. The number of projective arrangements of a tree, however, depends on its structure, in particular on the out-degree sequence of the tree, as is shown later in this section. Counting the amount of projective arrangements of a tree motivates a proper characterization that underpins the proof of Theorem \ref{thm:ExpeDProj}. For this, we need to introduce some notation.

Henceforth we denote directed edges of a rooted tree $\Rtree=(V,E;\Root)$ as $uv=(u,v)\in E$; all edges are oriented towards the leaves. We denote the set of children of a vertex $v\in V$ as $\neighs{v}$, and thus the out-degree of $v$ is $\degree{v}=|\neighs{v}|$ in the rooted tree. In particular, we refer to the root's children as $\Rneighs=\{u_1,\cdots,u_{\Rdegree}\} \subset V$. We denote the subtree of $\Rtree$ rooted at $u\in V$ as $\SubRtree{u}$; we denote its size (in vertices) as $\Nvert{u}=|V(\SubRtree{u})|$; notice that $\Nvert{u}\ge 1$. We say that $\SubRtree{v}$ is an immediate subtree of $\SubRtree{u}$ if $uv$ is an edge of the tree. Figure \ref{fig:rooted_tree} depicts a rooted tree and the immediate subtrees of $\Rtree$. 

We provide a closed-form formula for the number of projective arrangements of a rooted tree, $\NProj$. This result helps us characterize said arrangements.

\begin{figure}
	\centering
\scalebox{1}{
\begin{tikzpicture}[ipe stylesheet]
  \pic
     at (176, 760) {ipe disk};
  \pic
     at (116, 736) {ipe disk};
  \pic
     at (160, 736) {ipe disk};
  \pic
     at (216, 736) {ipe disk};
  \draw[ipe pointed-]
    (119.0502, 737.8193)
     -- (176, 760);
  \draw[ipe pointed-]
    (162.006, 738.4405)
     -- (176, 760);
  \draw[ipe pointed-]
    (213.5598, 738.1203)
     -- (176, 760);
  \draw
    (116, 736)
     -- (132, 692)
     -- (100, 692)
     -- (116, 736);
  \pic[ipe mark tiny]
     at (176, 716) {ipe disk};
  \pic[ipe mark tiny]
     at (188, 716) {ipe disk};
  \pic[ipe mark tiny]
     at (200, 716) {ipe disk};
  \node[ipe node]
     at (180, 760) {$\Root$};
  \node[ipe node]
     at (100, 736) {$u_1$};
  \node[ipe node]
     at (144, 736) {$u_2$};
  \node[ipe node]
     at (220, 740) {$u_{\Rdegree}$};
  \node[ipe node]
     at (108, 684) {$\SubRtree{u_1}$};
  \node[ipe node]
     at (152, 684) {$\SubRtree{u_2}$};
  \node[ipe node]
     at (204, 684) {$\SubRtree{u_{\Rdegree}}
$};
  \node[ipe node]
     at (156, 764) {$\Rtree$};
  \draw
    (160, 736)
     -- (176, 692)
     -- (144, 692)
     -- (160, 736);
  \draw
    (216, 736)
     -- (232, 692)
     -- (200, 692)
     -- (216, 736);
\end{tikzpicture}
}
	\caption{A tree $\Rtree$ rooted at $\Root$. The children of the root are $\Rneighs=\{u_1,u_2,\cdots,u_{\Rdegree}\}$, where $\Rdegree$ is the out-degree of $\Root$. Each $\SubRtree{u}$, for $u\in\Rneighs$, denotes the subtree of $\Rtree$ rooted at $u$.}
	\label{fig:rooted_tree}
\end{figure}

%
\begin{proposition}
\label{prop:NProj}
Let $\Rtree=(V,E;\Root)$ be a tree rooted at $\Root\in V$.
\begin{align}
\NProj
	&= (\Rdegree + 1)! \prod_{u\in\Rneighs} \NProj[\SubRtree{u}] \label{eq:NProj:recurrence} \\
	&= \prod_{v\in V} (\degree{v} + 1)! \label{eq:NProj:closed_form}
\end{align}
where $\degree{v}$ is the out-degree of vertex $v$ in the rooted tree. If $\Rdegree=0$ then $\NProj = 1$.
\end{proposition}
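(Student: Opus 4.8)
The plan is to prove the recurrence (\ref{eq:NProj:recurrence}) first, and then obtain the closed form (\ref{eq:NProj:closed_form}) by a straightforward induction on it. The base case is immediate: when $\Rdegree = 0$ the tree is a single vertex, which admits exactly one arrangement, so $\NProj = 1 = (0+1)!$, consistent with both formulae.

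For the recurrence, the key structural fact I would exploit is the one highlighted in the text: in any projective arrangement of $\Rtree$, the yield of each child subtree $\SubRtree{u}$, for $u\in\Rneighs$, occupies a contiguous \emph{segment} of the linear order, and the root $\Root$ must sit somewhere among these segments without being covered. So I would build an arbitrary projective arrangement of $\Rtree$ in two independent stages. First, treat each of the $\Rdegree$ child subtrees as a single indivisible block (segment) together with the root as a further singleton block; there are $\Rdegree + 1$ blocks in total, and I would argue that any ordering of these $\Rdegree + 1$ blocks along the line yields a valid projective arrangement of $\Rtree$ (the root is never covered because blocks do not interleave, and no edge from $\Root$ to a child crosses another such edge). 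That gives the factor $(\Rdegree + 1)!$. Second, \emph{within} each child segment, the vertices of $\SubRtree{u}$ must themselves be arranged projectively with $u$ as the local root; by definition these internal arrangements are exactly the projective arrangements of $\SubRtree{u}$, of which there are $\NProj[\SubRtree{u}]$, and they may be chosen independently across the different children. Multiplying the block orderings by the independent internal arrangements gives
\begin{equation*}
\NProj = (\Rdegree + 1)! \prod_{u\in\Rneighs} \NProj[\SubRtree{u}],
\end{equation*}
which is (\ref{eq:NProj:recurrence}).

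The step I expect to be the main obstacle is establishing the exact bijection underlying the first stage, namely that the projective arrangements of $\Rtree$ are in one-to-one correspondence with (ordering of $\Rdegree+1$ blocks) $\times$ (tuple of internal projective arrangements of the children). One direction is clear: every choice of block order and internal arrangements produces a distinct projective arrangement. The subtle direction is that \emph{every} projective arrangement arises this way and does so uniquely, which requires the non-covering condition on the root and the contiguity of each yield to force the global order to decompose cleanly into non-overlapping segments with $\Root$ isolated as its own block. I would invoke the interval/segment characterization of projectivity (Kuhlmann's result, as cited) to guarantee contiguity, and then argue that contiguity plus single-headedness leaves no freedom other than the block order and the within-block choices, so that the counts multiply without over- or under-counting.

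Finally, to derive (\ref{eq:NProj:closed_form}) I would unfold the recurrence by induction on the number of vertices. Assuming the closed form holds for each strictly smaller subtree $\SubRtree{u}$, substitute $\NProj[\SubRtree{u}] = \prod_{v\in V(\SubRtree{u})}(\degree{v}+1)!$ into (\ref{eq:NProj:recurrence}); since $\Rdegree = |\Rneighs|$ contributes the factor $(\Rdegree+1)!$ for the root and the disjoint vertex sets of the child subtrees partition $V\setminus\{\Root\}$, the product over all children telescopes into a single product $\prod_{v\in V}(\degree{v}+1)!$ over every vertex of $\Rtree$, as required.
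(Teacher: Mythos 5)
Your proposal is correct and takes essentially the same approach as the paper: the paper's proof likewise views a projective arrangement as a permutation of the $\Rdegree+1$ segments associated with the root (the root itself plus one segment per immediate subtree), multiplies the $(\Rdegree+1)!$ segment orderings by the independent counts $\NProj[\SubRtree{u}]$ to get the recurrence, and unfolds it for the closed form. If anything, your explicit treatment of the bijection (that every projective arrangement decomposes \emph{uniquely} into a block order plus internal arrangements) is more careful than the paper's proof, which asserts the decomposition without further justification.
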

%

The fact that subtrees span over intervals \cite{Kuhlmann2006a} is central to the proof of Proposition \ref{prop:NProj}. Since intervals are associated to a fixed pair of starting and ending positions in a linear sequence, we use the term {\em segment} of a rooted tree $\SubRtree{u}$ to refer to a real segment within the linear ordering containing all vertices of $\SubRtree{u}$ (Figure \ref{fig:permutation_segments}); technically, that segment is an interval of length $\Nvert{u}$ whose starting and ending positions are unknown until the whole tree is fully linearized. Thus, a segment is a movable set of vertices within the linear ordering. The concept of {\em segment} is equivalent to the notion of continuous {\em constituent} in headed phrase structure representations \cite{Kuhlmann2006a}. Hereafter, for simplicity, we refer to the `segment of a tree in a linear arrangement' simply as `segment of a tree' assuming that such a segment is defined with respect to a linear arrangement.

%
\begin{proof}[Proof of Proposition \ref{prop:NProj}]
We can associate a set of segments to each vertex. The set of vertex $u$ contains $\degree{u}+1$ segments: one segment corresponds to $u$ (the only segment of length 1), and the remaining $\degree{u}$ segments correspond to the immediate subtrees of $\SubRtree{u}$. We obtain a projective linear arrangement by permuting the elements of each set for all vertices. Therefore, a projective arrangement can be seen as being recursively composed of permutations of sets of segments. Such `recursion' starts at the permutation of the set of segments associated to $\Root$. Note, then, that there are $(\degree{u} + 1)!$ possible permutations of the segments associated to vertex $u$. For a fixed permutation of the segments associated to $\Root$, there are $\prod_{u\in\Rneighs} \NProj[\SubRtree{u}]$ different projective arrangements of its immediate subtrees $\SubRtree{u}$, hence the recurrence in Equation \ref{eq:NProj:recurrence}. Equation \ref{eq:NProj:closed_form} follows upon unfolding the recurrence.
\end{proof}
%

The proof of Proposition \ref{prop:NProj} can be used to devise a simple procedure to generate projective arrangements uniformly at random, and another to enumerate all projective arrangements, of a rooted tree. As explained in previous articles \cite{Gildea2007a,Futrell2015a}, the former method consists of first generating a uniformly random permutation of the $\degree{v}+1$ segments associated to every vertex $v\in V$ and, afterwards, constructing the arrangement using these permutations. When a tree is linearized using the permutations of the sets of segments, we say that each segment becomes an interval.

\subsection{The expected sum of edge lengths in random arrangements}

We first review the problem of computing $\expe{\VD{\Ftree}}$: the expected value of $\VD{\Ftree}$ in uniformly random unconstrained arrangements so as to introduce the methodology applied for $\ExpeDProj$. The calculation requires two steps: first, the calculation of $\expe{\Vd{uv}}$, the expected length of an arbitrary edge joining vertices $u$ and $v$, and second, the calculation of $\expe{\VD{\Ftree}}$; henceforth we denote these values as $\expe{D}$ and $\expe{\Vd{}}$ since they only depend on the size of $\Ftree$, not on its topology. For simplicity, we assume the definition of edge length in Equation \ref{eq:edge_length}.

The calculation of $\expe{\Vd{}}$ requires the calculation of $\prob{\Vd{}}$, that is the probability that an edge linking two vertices has length $\Vd{}$. This is actually the proportion of unconstrained linear arrangements such that the two vertices are at distance $\Vd{}$. Since arrangements are unconstrained, said probability, and the corresponding expectation, do not depend on the edge. There are $\NUnc = n!$ unconstrained linear arrangements and $2(n-\Vd{})(n-2)!$ unconstrained arrangements where the pair of vertices are at distance $\Vd{}$, hence
\begin{equation}
\prob{\Vd{}}
    = \frac{2(n - \Vd{})(n-2)!}{n!}
    = \frac{2(n - \Vd{})}{n(n-1)}
\end{equation}
as expected from previous research \cite{Zornig1984a,Ferrer2004a}. Then, the expected length of an edge in an unconstrained random arrangement is \cite{Zornig1984a,Ferrer2004a}
\begin{align}
\expe{\Vd{}}
    &= \sum_{\Vd{} = 1}^{n - 1} \Vd{} \prob{\Vd{}} \\
	&= \frac{2}{n(n - 1)}\left( n\sum_{\Vd{} = 1}^{n - 1} \Vd{} - \sum_{\Vd{} = 1}^{n - 1} \Vd{}^2 \right) \\
    &= \frac{2}{n(n - 1)}\left( n \frac{1}{2}(n - 1)n - \frac{1}{6}(n - 1)n(2n - 1) \right)\\
    &= \frac{n + 1}{3}
\end{align}
The third equality follows from well-known formulae on the sum of integer numbers. The second step is the calculation of $\expe{D}$, the expected value of $D$ in an unconstrained arrangement. Since a tree has $n-1$ edges and applying linearity of expectation, $\expe{D}= (n-1)\expe{\Vd{}}$, which gives Equation \ref{eq:ExpeDUnconstrained:closed_form}. Note that neither $\expe{\Vd{uv}}$ nor $\ExpeDUnc$ depend on $\Ftree$'s topology (excluding $n$, the size of the tree).

\subsection{The expected sum of edge lengths in random projective arrangements}
\label{sec:exp_D:exp_sum}

To obtain an arithmetic expression for $\ExpeDProj$, we follow again a two-step approach.  First, we calculate the expected length of an edge in uniformly random projective arrangements. However, unlike the unconstrained case, the edge must be incident to the root. Second, we calculate the expected $\ExpeDProj$ applying the result of the first step. Before we proceed, we need to introduce some notation. 

An edge connecting the root of the tree ($\Root$) with one of its children ($u$) can be decomposed into two parts: its {\em anchor} \cite{Shiloach1979a,Chung1984a} and its {\em coanchor} (Figure \ref{fig:permutation_segments}). Such decomposition is also found in \citet{Gildea2007a,Park2009a} but using different terminology. In the context of projective linear arrangements, we define $\anchor{\Root u}$ as the length of the anchor, that is, the number of positions of the linear arrangement covered by the edge $\Root u$ in the segment of $\SubRtree{u}$ including the end of the edge $\arr(u)$; similarly, we define $\coanchor{\Root u}$ as the number of positions of the linear arrangement that are covered by that edge in segments other than that of $\SubRtree{u}$ and $\Root$. Put differently, $\anchor{\Root u}$ is the width of the part of $\SubRtree{u}$ covered by the edge $\Root u$ including the end of the edge $\arr(u)$; similarly, $\Vcoanchor{\Root u}$ is the total width of $\SubRtree{v}$ over all $\Root$'s children $v$ that fall between $\Root$ and $u$. Then the length of an edge connecting $\Root$ with $u$ is $\lenedge{\Root u} = |\arr(\Root) - \arr(u)| = \anchor{\Root u} + \coanchor{\Root u}$.

\begin{figure}
	\centering
\scalebox{1}{
\begin{tikzpicture}[ipe stylesheet]
  \pic
     at (260, 604) {ipe disk};
  \node[ipe node]
     at (258.384, 591.659) {$\Root$};
  \draw[shift={(384, 608)}, xscale=-1]
    (0, 0) rectangle (40, -8);
  \draw[shift={(312, 608)}, xscale=-1]
    (0, 0) rectangle (40, -8);
  \pic[ipe mark tiny]
     at (336, 604) {ipe disk};
  \pic[ipe mark tiny]
     at (328, 604) {ipe disk};
  \pic[ipe mark tiny]
     at (320, 604) {ipe disk};
  \node[ipe node]
     at (285.88, 588.221) {$\SubRtree{v}$};
  \node[ipe node]
     at (357.88, 588.221) {$\SubRtree{w}$};
  \node[ipe node]
     at (196, 644) {$\anchor{\Root u}$};
  \draw
    (224, 636)
     -- (224, 640)
     -- (260, 640)
     -- (260, 636);
  \draw
    (204, 636)
     -- (204, 640)
     -- (224, 640)
     -- (224, 636);
  \draw
    (184, 608) rectangle (224, 600);
  \pic[ipe mark tiny]
     at (248, 604) {ipe disk};
  \pic[ipe mark tiny]
     at (240, 604) {ipe disk};
  \pic[ipe mark tiny]
     at (232, 604) {ipe disk};
  \pic[ipe mark tiny]
     at (176, 604) {ipe disk};
  \pic[ipe mark tiny]
     at (168, 604) {ipe disk};
  \pic[ipe mark tiny]
     at (160, 604) {ipe disk};
  \node[ipe node]
     at (197.88, 588.221) {$\SubRtree{u}$};
  \draw
    (260.0004, 603.9998)
     arc[start angle=-146.6876, end angle=-45.9938, x radius=23.5227, y radius=-23.5227];
  \draw
    (259.9999, 604)
     arc[start angle=-157.8969, end angle=-26.5081, x radius=57.0994, y radius=-57.0994];
  \draw
    (259.9998, 603.9999)
     arc[start angle=17.6362, end angle=154.1922, radius=30.217];
  \node[ipe node]
     at (228, 644) {$\coanchor{\Root u}$};
  \draw
    (264, 608) rectangle (256, 600);
  \draw[ipe dash dotted]
    (224, 636)
     -- (224, 604);
  \draw[ipe dash dotted]
    (204, 636)
     -- (204, 608);
  \draw[ipe dash dotted]
    (260, 636)
     -- (260, 608);
\end{tikzpicture}
}
	\caption{A permutation $\tau$ of the segments associated to $\Root$. Each rectangle represents the segment of the root $\Root$ and those of the subtrees rooted at $u,v,w\in\Rneighs$, denoted as $\SubRtree{u},\SubRtree{v},\SubRtree{w}$. The representative vertices of the segments are, from left to right: $u$, $\Root$, $v$ and $w$. The anchor of edge $\Root u$ (whose length is $\Vanchor{\Root u}$), and the coanchor of edge $\Root u$ (whose length is $\Vcoanchor{\Root u}$), are delimited by the dotted lines above edge $\Root u$.}
	\label{fig:permutation_segments}
\end{figure}

The next lemma shows that the expected value of $\Vd{\Root u}$ depends only on the size of the whole tree ($\RNvert$) and the size of the subtree rooted at the child ($\Nvert{u}$).
\begin{lemma}
\label{lemma:ExpeProj:length_anchor_coanchor_edge}
Let $\Rtree=(V,E;\Root)$ be a tree rooted at $\Root\in V$. Given an edge $\Root u\in E$, its anchor's expected length in uniformly random projective arrangements is
\begin{equation}
\label{eq:ExpeD:expected_length_anchor}
\rexpe{\Vanchor{\Root u}}= \frac{\Nvert{u} + 1}{2}
\end{equation}
while the expected length of its coanchor is
\begin{equation}
\label{eq:ExpeD:expected_length_coanchor}
\rexpe{\Vcoanchor{\Root u}}= \frac{\RNvert - \Nvert{u} - 1}{3}
\end{equation}
Therefore, the expected length of an edge $\Root u\in E$ in such arrangements is
\begin{equation}
\label{eq:ExpeD:root_to_child}
\rexpe{\Vd{\Root u}}
    = \rexpe{\Vanchor{\Root u} + \Vcoanchor{\Root u}}
    = \frac{2\RNvert + \Nvert{u} + 1}{6}
\end{equation}
\end{lemma}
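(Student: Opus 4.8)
The plan is to treat the two pieces $\Vanchor{\Root u}$ and $\Vcoanchor{\Root u}$ separately and then combine them by linearity, since $\Vd{\Root u} = \Vanchor{\Root u} + \Vcoanchor{\Root u}$. Throughout I would exploit the recursive description of a uniformly random projective arrangement from the proof of Proposition \ref{prop:NProj}: at the root one draws a uniformly random permutation of the $\Rdegree + 1$ segments (the singleton segment containing $\Root$ together with the $\Rdegree$ subtree segments $\SubRtree{w}$, $w\in\Rneighs$) and, \emph{independently}, builds inside each $\SubRtree{w}$ a uniformly random projective arrangement of that subtree. Two elementary facts about uniform permutations will do most of the work: the relative order of any two designated segments is uniform, and the relative order of any three designated segments is uniform over the six possibilities.

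For the anchor I would first condition on which side of the segment of $\SubRtree{u}$ the root falls on. Because the relative order of the singleton segment $\{\Root\}$ and the segment $\SubRtree{u}$ is uniform, each side occurs with probability $1/2$. Conditioned on a side, $\Vanchor{\Root u}$ equals the position of $u$ inside the segment of $\SubRtree{u}$ counted from the end facing $\Root$; this quantity is governed solely by the internal arrangement of $\SubRtree{u}$, which is a uniformly random projective arrangement of that subtree and is independent of the top-level permutation. The key observation is that reversing any projective arrangement yields another projective arrangement of equal probability, so the position of $u$ measured from the left end and the position measured from the right end are identically distributed; each therefore has expectation $(\Nvert{u}+1)/2$. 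Averaging over the two equally likely sides gives $\rexpe{\Vanchor{\Root u}} = (\Nvert{u}+1)/2$, which is Equation \ref{eq:ExpeD:expected_length_anchor}.

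For the coanchor I would use linearity over the remaining subtrees. Writing $\Vcoanchor{\Root u} = \sum_{w\in\Rneighs\setminus\{u\}} \Nvert{w}\,\mathbf{1}[\SubRtree{w}\text{ lies between }\{\Root\}\text{ and }\SubRtree{u}]$, each other subtree contributes its whole size $\Nvert{w}$ exactly when its segment is sandwiched between the root's segment and the segment of $\SubRtree{u}$. Restricting the uniform permutation to the three segments $\{\Root\}$, $\SubRtree{u}$ and $\SubRtree{w}$, their relative order is uniform over the six orderings and $\SubRtree{w}$ is the middle one in two of them, so that event has probability $1/3$ independently of $\Rdegree$. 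Hence $\rexpe{\Vcoanchor{\Root u}} = \tfrac{1}{3}\sum_{w\in\Rneighs\setminus\{u\}}\Nvert{w}$, and since the subtrees hanging from the root partition the $\RNvert-1$ non-root vertices, $\sum_{w\in\Rneighs}\Nvert{w} = \RNvert-1$, giving $\rexpe{\Vcoanchor{\Root u}} = (\RNvert-\Nvert{u}-1)/3$, i.e.\ Equation \ref{eq:ExpeD:expected_length_coanchor}. Adding the two expectations and clearing denominators then yields Equation \ref{eq:ExpeD:root_to_child}.

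I expect the anchor to be the main obstacle. The coanchor reduces to the clean combinatorial fact that a third item lands between two fixed items with probability $1/3$, but the anchor requires care in defining $u$'s position within its own segment, in separating the two independent sources of randomness (the top-level permutation versus the internal arrangement of $\SubRtree{u}$), and in invoking reflection symmetry to argue that this position averages to $(\Nvert{u}+1)/2$ regardless of which side the root occupies.
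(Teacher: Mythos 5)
Your proposal is correct, and for the coanchor it takes a genuinely different route from the paper's. For the anchor you proceed essentially as the paper does: condition on which side of the segment of $\SubRtree{u}$ the root lands (probability $1/2$ each, by reversal symmetry), observe that the anchor equals the relative position $q_u$ of $u$ in its segment measured from the end facing $\Root$, and use the independence of the within-segment randomness from the top-level permutation. The only difference is cosmetic: you evaluate $\rexpe{q_u}=(\Nvert{u}+1)/2$ outright via the reversal bijection on projective arrangements of $\SubRtree{u}$, whereas the paper never computes $\rexpe{q_u}$ at all --- it writes the two conditional contributions as $\Nvert{u}-q_u+1$ and $q_u$ and lets the $q_u$ terms cancel after noting that their conditional expectations coincide. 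For the coanchor, the paper applies the law of total expectation over $s_{\Root u}$, the number of intermediate segments: it derives $\rprob{s_{\Root u}=s}=2(\Rdegree-s)/((\Rdegree+1)\Rdegree)$ by a counting argument, takes the expected length of each intermediate segment to be the average $(\RNvert-\Nvert{u}-1)/(\Rdegree-1)$, and evaluates $\sum_{s=1}^{\Rdegree-1}s(\Rdegree-s)$. Your indicator decomposition $\Vcoanchor{\Root u}=\sum_{w\in\Rneighs\setminus\{u\}}\Nvert{w}\,\mathbf{1}[\SubRtree{w}\text{ between }\Root\text{ and }\SubRtree{u}]$, combined with the fact that a designated one of three items in a uniform permutation is the middle one with probability $1/3$, reaches the same expectation with no summation over $s$ and no need for the distribution of $s_{\Root u}$; it also makes explicit the exchangeability (each other subtree is equally likely to occupy any intermediate slot) on which the paper's average-segment-length step quietly relies. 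What the paper's longer route buys is the distribution of $s_{\Root u}$ as a by-product; what yours buys is brevity and a cleaner justification of that one step.
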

\begin{proof}

By the law of total expectation 
\begin{equation}
\begin{split}
\rexpe{\Vanchor{\Root u}}
    &=
    \rcondexpe{\Vanchor{\Root u}}{\arr(u) < \arr(\Root)}
    \rprob{\arr(u) < \arr(\Root)} \\
    &+ \rcondexpe{\Vanchor{\Root u}}{\arr(u) > \arr(\Root)}
    \rprob{\arr(u) > \arr(\Root)}
\end{split}
\end{equation}
where $\rprob{\arr(u) < \arr(\Root)}$ is the probability that $u$ precedes $\Root$ in a random projective linear arrangement and $\rprob{\arr(u) > \arr(\Root)} + \rprob{\arr(u) < \arr(\Root)} = 1$. As any projective linear arrangement such that $u$ precedes $\Root$ has a reverse projective arrangement where $u$ follows $\Root$, then $\rprob{\arr(u) < \arr(\Root)} = 1/2$ and 
\begin{equation}
\rexpe{\Vanchor{\Root u}}
    =
    \frac{1}{2}
    \left(
        \rcondexpe{\Vanchor{\Root u}}{\arr(u) < \arr(\Root)} +
        \rcondexpe{\Vanchor{\Root u}}{\arr(u) > \arr(\Root)}
    \right)
\end{equation}
Now, let $q_u$ be the relative position of vertex $u$ in its segment in the (projective) linear arrangement (the $i$th vertex of said segment is at relative position $i$). If $u$ precedes $\Root$ in the linear arrangement ($\arr(u) < \arr(\Root)$) as in Figure \ref{fig:permutation_segments} then $\Vanchor{\Root u} = \Nvert{u} - q_u + 1$. If $u$ follows $\Root$ in the linear arrangement ($\arr(\Root) < \arr(u)$), $\Vanchor{\Root u} = q_u$. Applying these two results, we obtain
\begin{equation}
\rexpe{\Vanchor{\Root u}}
    = \frac{1}{2}
    \left(
        \rcondexpe{\Nvert{u} - q_u + 1}{\arr(u) < \arr(\Root)} +
        \rcondexpe{ q_u }{ \arr(u) > \arr(\Root)}
        \right)
\end{equation}
By symmetry, $\rcondexpe{q_u}{\arr(u) < \arr(\Root)} = \rcondexpe{q_u}{\arr(u) > \arr(\Root)}$ and then $\rexpe{\Vanchor{\Root u}} = (\Nvert{u} + 1)/2$, hence Equation \ref{eq:ExpeD:expected_length_anchor}.

In order to calculate $\Vcoanchor{\Root u}$, we define $s_{\Root u}$ as the number of intermediate segments between $\Root$ and the segment of $\SubRtree{u}$ in the linear arrangement. Therefore, $\Vcoanchor{\Root u}$ can be decomposed in terms of the lengths of each of these segments. The length of the $i$th segment in, say, the left-to-right order, is denoted as $\Vlengthsegment{\Root u}{i}$. Formally, $\Vcoanchor{\Root u}$ can be decomposed as
\begin{equation}
\Vcoanchor{\Root u} =
	\sum_{i=1}^{s_{\Root u}}
		\Vlengthsegment{\Root u}{i}
\end{equation}
By the law of total expectation,
\begin{equation}
\label{eq:ExpeDProj:expectation_beta:total_expected_law}
\rexpe{\Vcoanchor{\Root u}}
    = \sum_{s=1}^{\Rdegree - 1}
		\rcondexpe{\Vcoanchor{\Root u}}{s_{\Root u} = s} \rprob{s_{\Root u} = s}
\end{equation}
where $\rcondexpe{\Vcoanchor{\Root u}}{s_{\Root u}=s}$ is the expectation of $\Vcoanchor{\Root u}$ given that $u$ and $\Root$ are separated by $s$ segments, and $\rprob{s_{\Root u} = s}$ is the probability that $u$ and $\Root$ are separated by $s$ segments, both in uniformly random projective arrangements. On the one hand, 
\begin{equation}
\label{eq:ExpeDProj:length_coanchor:intermediate}
\rcondexpe{\Vcoanchor{\Root u}}{s_{\Root u} = s}
    = \rexpe{\sum_{i=1}^{s} \Vlengthsegment{\Root u}{i} }
    = s \rexpe{ \Vlengthsegment{\Root u}{i} }
\end{equation}
where
\begin{equation}
\rexpe{ \Vlengthsegment{\Root u}{i} } = \frac{\RNvert - \Nvert{u} - 1}{\Rdegree - 1} 
\end{equation}
is the average length of the segments excluding those of $\Root$ and $\SubRtree{u}$. On the other hand, $\rprob{s_{\Root u} = s}$ is the proportion of projective linear arrangements where the segments of $\SubRtree{u}$ and that of $\Root$ are separated by $s$ segments in the linear arrangement, that is   
\begin{equation}
\label{eq:probability_intermediate_segments}
\rprob{s_{\Root u} = s}
    =
    \frac
        {2(d_r - s)(d_r - 1)! \prod_{u\in\Rneighs} \NProj[\SubRtree{u}]}
        {(d_r + 1)! \prod_{u\in\Rneighs} \NProj[\SubRtree{u}]}
    = \frac{2(d_r - s)}{(d_r + 1)d_r}
\end{equation}
Plugging Equations \ref{eq:ExpeDProj:length_coanchor:intermediate} and \ref{eq:probability_intermediate_segments} into Equation \ref{eq:ExpeDProj:expectation_beta:total_expected_law}, 
one finally obtains Equation \ref{eq:ExpeD:expected_length_coanchor},
\begin{equation}
\rexpe{\Vcoanchor{\Root u}}
    = 2\frac{\RNvert - \Nvert{u} - 1}{(d_r+1)d_r(d_r - 1)} \sum_{s=1}^{d_r - 1} s(d_r - s)
    = \frac{\RNvert - \Nvert{u} - 1}{3}
\end{equation}
\end{proof}


Now we can derive an arithmetic expression for $\ExpeDProj$.

%
\begin{proof}[Proof of Theorem \ref{thm:ExpeDProj} (stated on page \pageref{thm:ExpeDProj})]
\label{proof:ExpeDProj}

Consider the random variable $\VD{\Rtree}$ over the probability space of uniformly random (unconstrained) linear arrangements of $\Rtree$, as defined above. This variable can be decomposed into two summations
\begin{equation}
\VD{\Rtree}
	= \sum_{u \in \Rneighs} \VD{\SubRtree{u}} +
	  \sum_{u \in \Rneighs} \Vd{\Root u}
\end{equation}
The first summation groups the edges by subtrees of $\Rtree$. The second summation groups the edges incident to the root $\Root$. Then, we can use linearity of expectation to obtain
\begin{equation}
\label{eq:expected_D_projective:definition}
\rexpe{\VD{\Rtree}}
	= \sum_{u \in \Rneighs} \ExpeDProj[\SubRtree{u}] +
	  \sum_{u \in \Rneighs} \rexpe{\Vd{\Root u}}
\end{equation}
The recurrence in Equation \ref{eq:ExpeDProj:recurrence} follows easily from applying Lemma \ref{lemma:ExpeProj:length_anchor_coanchor_edge} to Equation \ref{eq:expected_D_projective:definition}, which gives,
\begin{equation}
\sum_{\Root u \in E} \rexpe{\Vd{\Root u}}
	= \frac{1}{6}\sum_{\Root u \in E} (2\RNvert + 1) + \frac{1}{6}\sum_{\Root u \in E} \Nvert{u}
	= \frac{\Rdegree(2\RNvert + 1) + \RNvert - 1}{6}
\end{equation}
Equation \ref{eq:ExpeDProj:closed_form} follows upon unfolding the recurrence.
\end{proof}
%

In the proof above we implicitly use our definition of edge length $\lenedge{uv}$ (Equation \ref{eq:edge_length}). Nevertheless, the expression in Equation \ref{eq:ExpeDProj:closed_form} can be easily adjusted to use different definitions of edge length, for example, $\lenedgep{uv}$ (Equation \ref{eq:edge_length_minus_1}). It suffices to find an appropriate transformation of our definition of $\VD{\Rtree}$ (Equation \ref{eq:sum_edge_lengths}) into the one desired, namely $\VDp{\Rtree}$. The next corollary gives the solution. 

%
\begin{corollary}
\label{cor:ExpeDProj:reformulation}
Let $\Rtree=(V,E;\Root)$ be a tree rooted at $\Root\in V$. We have that
\begin{align}
\rexpe{\VDp{\Rtree}}
&=	\frac{\Rdegree(2\RNvert - 5) + \RNvert - 1}{6}
	+ \sum_{u\in\Rneighs} \rexpe{\VDp{\SubRtree{u}}}
	\label{eq:ExpeDProj:recurrence:n_minus_1} \\
&=	\frac{1}{6} \left( 5 - 6\RNvert + \sum_{v\in V} \Nvert{v} (2\degree{v} + 1) \right)
	\label{eq:ExpeDProj:closed_form:n_minus_1}
\end{align}
\end{corollary}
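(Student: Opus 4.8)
The plan is to exploit the elementary relation between the two edge-length conventions. Since $\lenedgep{uv} = \lenedge{uv} - 1$ for every edge (compare Equations \ref{eq:edge_length} and \ref{eq:edge_length_minus_1}) and a tree on $\RNvert$ vertices has exactly $\RNvert - 1$ edges, summing over all edges yields the deterministic, arrangement-by-arrangement identity $\VDp{\Rtree} = \VD{\Rtree} - (\RNvert - 1)$. Because the shift $\RNvert - 1$ is a constant independent of the arrangement, I would take expectations conditioned to projective arrangements and use linearity to obtain $\rexpe{\VDp{\Rtree}} = \ExpeDProj - (\RNvert - 1)$. This single observation is the crux; everything else is substitution into Theorem \ref{thm:ExpeDProj}.

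For the closed form (Equation \ref{eq:ExpeDProj:closed_form:n_minus_1}), I would substitute Equation \ref{eq:ExpeDProj:closed_form} into the displayed identity and absorb the term $-(\RNvert - 1)$ into the $\tfrac{1}{6}(\cdots)$ bracket by writing it as $-6(\RNvert - 1)/6$. Collecting the constants $-1 + 6 = 5$ then gives $\tfrac{1}{6}(5 - 6\RNvert + \sum_{v\in V}\Nvert{v}(2\degree{v}+1))$, exactly as claimed.

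For the recurrence (Equation \ref{eq:ExpeDProj:recurrence:n_minus_1}), I would start from Equation \ref{eq:ExpeDProj:recurrence} and translate each $D$-quantity into its $D^*$-counterpart using the shift applied at the appropriate scale: $\ExpeDProj = \rexpe{\VDp{\Rtree}} + (\RNvert - 1)$ on the left, and $\ExpeDProj[\SubRtree{u}] = \rexpe{\VDp{\SubRtree{u}}} + (\Nvert{u} - 1)$ inside the sum. The only point needing a little care, though still routine, is the bookkeeping of the correction constant: since the subtrees rooted at the children of $\Root$ partition $V \setminus \{\Root\}$, one has $\sum_{u\in\Rneighs}\Nvert{u} = \RNvert - 1$, whence $\sum_{u\in\Rneighs}(\Nvert{u} - 1) = (\RNvert - 1) - \Rdegree$. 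After cancelling the two copies of $\RNvert - 1$ and folding the residual $-\Rdegree$ into the leading fraction, the constant $\tfrac{1}{6}(\Rdegree(2\RNvert + 1) + \RNvert - 1) - \Rdegree$ simplifies to $\tfrac{1}{6}(\Rdegree(2\RNvert - 5) + \RNvert - 1)$, yielding Equation \ref{eq:ExpeDProj:recurrence:n_minus_1}. There is no genuine obstacle beyond verifying these arithmetic simplifications, since the substantive probabilistic work was already carried out in the proof of Theorem \ref{thm:ExpeDProj}.
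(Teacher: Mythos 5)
Your proposal is correct and follows essentially the same route as the paper's own proof: both rest on the deterministic shift $\rexpe{\VDp{\Rtree}} = \ExpeDProj - (\RNvert - 1)$ applied to the two formulas of Theorem \ref{thm:ExpeDProj}, together with the bookkeeping identity $\sum_{u\in\Rneighs}(\Nvert{u}-1) = \RNvert - 1 - \Rdegree$ for the recurrence. If anything, you are slightly more explicit than the paper in justifying the shift as an arrangement-by-arrangement identity before taking conditional expectations, but the substance is identical.
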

\begin{proof}[Proof of Corollary \ref{cor:ExpeDProj:reformulation}]
The fact that $\rexpe{\VDp{\Rtree}} = \ExpeDProj - (\RNvert - 1)$, transforms Equation \ref{eq:ExpeDProj:recurrence} into Equation \ref{eq:ExpeDProj:recurrence:n_minus_1} immediately as well as Equation \ref{eq:ExpeDProj:closed_form} into Equation \ref{eq:ExpeDProj:closed_form:n_minus_1} thanks to
\begin{equation}
\sum_{u\in\Rneighs} \ExpeDProj[\SubRtree{u}]
    = \sum_{u\in\Rneighs} \left( \rexpe{\VDp{\SubRtree{u}}}  + \Nvert{u} - 1 \right)
    = \RNvert - 1 - d_r + \sum_{u\in\Rneighs} 
 \rexpe{\VDp{\SubRtree{u}}}
\end{equation}
\end{proof}
%

It is easy to see that Equations \ref{eq:ExpeDProj:closed_form} and \ref{eq:ExpeDProj:closed_form:n_minus_1} can both be evaluated in $\bigO{n}$-time and $\bigO{n}$-space, where $n$ is the number of vertices of the tree: one only needs to compute the values $\Nvert{v}$ in $\bigO{n}$-time, store them in $\bigO{n}$ space, and then evaluate the formula, also in $\bigO{n}$-time using those values. In the analysis above, we are assuming that the values of $\degree{v}$ are already computed; depending on the data structure used to represent the tree, the cost of computing $\degree{v}$ might be relevant.

\subsection{Formulae for classes of trees}
\label{sec:exp_D:tree_classes}

Here we consider three kinds of free trees which are later transformed into rooted trees \cite{Harary1969a,Valiente2021a}. First, {\em linear} (or {\em path}) trees are trees in which the maximum degree is 2. {\em Star} trees consist of a vertex connected to $n-1$ leaves; also, a complete bipartite graph $K_{1,n-1}$. A {\em quasi-star} tree is a star tree in which one of its edges has been subdivided once with a vertex in the middle\footnote{Alternatively, an $n$-vertex quasi-star tree is obtained by joining to a 2-vertex complete graph, $K_2$, a pendant vertex to one end and $n-3$ pendant vertices to the other end of $K_2$; a quasi-star tree is a particular case of bistar tree \cite{SanDiego2014a}.}. For the following analyses, we define the hub of a rooted tree as the vertex of the underlying free tree that has the highest degree; we also use the term `leaf' to refer to a leaf in the underlying free tree. We now instantiate Equations \ref{eq:ExpeDProj:closed_form} and \ref{eq:NProj:closed_form} for several classes of trees (Table \ref{table:formulae_ExpeDProj_trees}): Star trees, $\Sn$, rooted at the hub, $\Snh$, and at a leaf, $\Snl$; Quasi-star trees, $\Qn$, rooted at the hub, $\Qnh$, at a leaf adjacent to the hub, $\Qnhl$, at the leaf not adjacent to the hub $\Qne$, and at the only internal vertex that is not the hub, $\Qnb$; Linear trees when rooted at a vertex at distance $k\ge 0$ from one of the leaves, $\Lnk$. Each class of tree is depicted in Figure \ref{fig:classes_of_trees}. These classes of trees are chosen for graph theoretic reasons. Linear trees minimize the variance of the degree \cite{Ferrer2013a}; star graphs maximize it \cite{Ferrer2013a} and all their unconstrained linear arrangements are planar; concerning $\Snh$, all its linear arrangements are projective; quasi star trees maximize the variance of the degree among trees whose set of unconstrained arrangements contains some non-planar arrangement \cite{Ferrer2016a}.

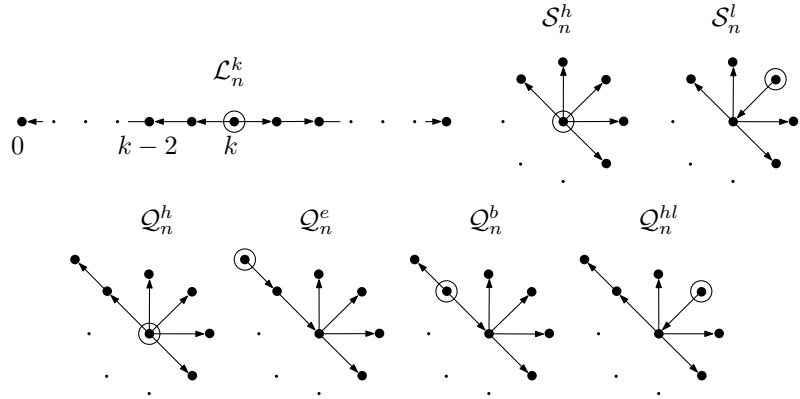
\begin{figure}
	\centering
\scalebox{1}{
\begin{tikzpicture}[ipe stylesheet]
  \pic
     at (204, 720) {ipe disk};
  \draw
    (204, 720) circle[radius=4];
  \pic
     at (220, 720) {ipe disk};
  \pic
     at (236, 720) {ipe disk};
  \pic
     at (284, 720) {ipe disk};
  \pic[ipe mark scale=1.0]
     at (248, 720) {ipe disk};
  \pic[ipe mark scale=1.0]
     at (260, 720) {ipe disk};
  \pic[ipe mark scale=1.0]
     at (272, 720) {ipe disk};
  \draw[-{>[ipe arrow tiny]}]
    (204, 720)
     -- (217.807, 720.048);
  \draw[-{>[ipe arrow tiny]}]
    (220, 720)
     -- (233.659, 720.115);
  \draw
    (236, 720)
     -- (244, 720);
  \draw[-{>[ipe arrow tiny]}]
    (276, 720)
     -- (281.846, 720.048);
  \node[ipe node]
     at (196, 736) {$\Lnk$};
  \pic
     at (188, 720) {ipe disk};
  \pic
     at (172, 720) {ipe disk};
  \pic
     at (124, 720) {ipe disk};
  \pic[ipe mark scale=1.0]
     at (160, 720) {ipe disk};
  \pic[ipe mark scale=1.0]
     at (148, 720) {ipe disk};
  \pic[ipe mark scale=1.0]
     at (136, 720) {ipe disk};
  \draw[-{>[ipe arrow tiny]}]
    (204, 720)
     -- (190.134, 720.015);
  \draw[-{>[ipe arrow tiny]}]
    (188, 720)
     -- (174.282, 720.015);
  \draw[shift={(172, 720)}, xscale=-1]
    (0, 0)
     -- (8, 0);
  \draw[-{>[ipe arrow tiny]}]
    (132, 720)
     -- (126.282, 720);
  \node[ipe node]
     at (120, 708) {$0$};
  \node[ipe node]
     at (160, 708) {$k-2$};
  \node[ipe node]
     at (200, 708) {$k$};
  \pic[ipe mark scale=3.0]
     at (328.027, 720) {ipe disk};
  \pic[ipe mark scale=3.0]
     at (350.745, 720.132) {ipe disk};
  \pic[ipe mark scale=3.0]
     at (344.108, 735.839) {ipe disk};
  \pic[ipe mark scale=3.0]
     at (327.848, 742.476) {ipe disk};
  \pic[ipe mark scale=3.0]
     at (344.219, 704.203) {ipe disk};
  \pic[ipe mark scale=1.0]
     at (327.958, 697.456) {ipe disk};
  \pic[ipe mark scale=3.0]
     at (312.141, 736.06) {ipe disk};
  \pic[ipe mark scale=1.0]
     at (305.282, 720.021) {ipe disk};
  \pic[ipe mark scale=1.0]
     at (311.919, 703.982) {ipe disk};
  \draw[-{>[ipe arrow tiny]}]
    (328, 720)
     -- (313.581, 734.574);
  \draw[-{>[ipe arrow tiny]}]
    (328, 720)
     -- (342.586, 734.338);
  \draw[-{>[ipe arrow tiny]}]
    (328, 720)
     -- (328.061, 740.453);
  \draw[-{>[ipe arrow tiny]}]
    (328, 720)
     -- (342.527, 705.629);
  \draw[-{>[ipe arrow tiny]}]
    (328, 720)
     -- (348.669, 720.081);
  \draw
    (328, 720) circle[radius=4];
  \node[ipe node]
     at (320, 756) {$\Snh$};
  \pic[ipe mark scale=3.0]
     at (392.027, 720) {ipe disk};
  \pic[ipe mark scale=3.0]
     at (414.745, 720.132) {ipe disk};
  \pic[ipe mark scale=3.0]
     at (408.108, 735.839) {ipe disk};
  \pic[ipe mark scale=3.0]
     at (391.848, 742.476) {ipe disk};
  \pic[ipe mark scale=3.0]
     at (408.219, 704.203) {ipe disk};
  \pic[ipe mark scale=1.0]
     at (391.958, 697.456) {ipe disk};
  \pic[ipe mark scale=3.0]
     at (376.141, 736.06) {ipe disk};
  \pic[ipe mark scale=1.0]
     at (369.282, 720.021) {ipe disk};
  \pic[ipe mark scale=1.0]
     at (375.919, 703.982) {ipe disk};
  \draw[{<[ipe arrow tiny]}-]
    (377.57, 734.35)
     -- (392, 720);
  \draw[{<[ipe arrow tiny]}-]
    (393.596, 721.475)
     -- (408, 736);
  \draw[-{>[ipe arrow tiny]}]
    (392, 720)
     -- (392.126, 740.322);
  \draw[-{>[ipe arrow tiny]}]
    (392, 720)
     -- (406.571, 705.531);
  \draw[-{>[ipe arrow tiny]}]
    (392, 720)
     -- (412.533, 719.959);
  \draw
    (408, 736) circle[radius=4];
  \node[ipe node]
     at (384, 756) {$\Snl$};
  \pic[ipe mark scale=3.0]
     at (172.027, 640) {ipe disk};
  \pic[ipe mark scale=3.0]
     at (194.745, 640.132) {ipe disk};
  \pic[ipe mark scale=3.0]
     at (188.108, 655.839) {ipe disk};
  \pic[ipe mark scale=3.0]
     at (171.848, 662.476) {ipe disk};
  \pic[ipe mark scale=3.0]
     at (188.219, 624.203) {ipe disk};
  \pic[ipe mark scale=1.0]
     at (171.958, 617.456) {ipe disk};
  \pic[ipe mark scale=3.0]
     at (156.141, 656.06) {ipe disk};
  \pic[ipe mark scale=1.0]
     at (149.282, 640.021) {ipe disk};
  \pic[ipe mark scale=1.0]
     at (155.919, 623.982) {ipe disk};
  \draw[-{>[ipe arrow tiny]}]
    (172, 640)
     -- (157.559, 654.499);
  \draw[-{>[ipe arrow tiny]}]
    (172, 640)
     -- (186.448, 654.243);
  \draw[-{>[ipe arrow tiny]}]
    (172, 640)
     -- (172.093, 660.267);
  \draw[-{>[ipe arrow tiny]}]
    (172, 640)
     -- (186.556, 625.393);
  \draw[-{>[ipe arrow tiny]}]
    (172, 640)
     -- (192.549, 639.965);
  \draw
    (172, 640) circle[radius=4];
  \node[ipe node]
     at (168, 680) {$\Qnh$};
  \node[ipe node]
     at (228, 680) {$\Qne$};
  \node[ipe node]
     at (292, 680) {$\Qnb$};
  \node[ipe node]
     at (356, 680) {$\Qnhl$};
  \pic[ipe mark scale=3.0]
     at (144, 668) {ipe disk};
  \draw[-{>[ipe arrow tiny]}]
    (156, 656)
     -- (145.613, 666.521);
  \pic[ipe mark scale=3.0]
     at (236.027, 640) {ipe disk};
  \pic[ipe mark scale=3.0]
     at (258.745, 640.132) {ipe disk};
  \pic[ipe mark scale=3.0]
     at (252.108, 655.839) {ipe disk};
  \pic[ipe mark scale=3.0]
     at (235.848, 662.476) {ipe disk};
  \pic[ipe mark scale=3.0]
     at (252.219, 624.203) {ipe disk};
  \pic[ipe mark scale=1.0]
     at (235.958, 617.456) {ipe disk};
  \pic[ipe mark scale=3.0]
     at (220.141, 656.06) {ipe disk};
  \pic[ipe mark scale=1.0]
     at (213.282, 640.021) {ipe disk};
  \pic[ipe mark scale=1.0]
     at (219.919, 623.982) {ipe disk};
  \draw[{<[ipe arrow tiny]}-]
    (234.429, 641.468)
     -- (220, 656);
  \draw[-{>[ipe arrow tiny]}]
    (236, 640)
     -- (250.527, 654.311);
  \draw[-{>[ipe arrow tiny]}]
    (236, 640)
     -- (235.968, 660.386);
  \draw[-{>[ipe arrow tiny]}]
    (236, 640)
     -- (250.502, 625.524);
  \draw[-{>[ipe arrow tiny]}]
    (236, 640)
     -- (256.628, 640.058);
  \draw
    (208, 668) circle[radius=4];
  \pic[ipe mark scale=3.0]
     at (208, 668) {ipe disk};
  \draw[{<[ipe arrow tiny]}-]
    (218.485, 657.361)
     -- (208, 668);
  \pic[ipe mark scale=3.0]
     at (300.027, 640) {ipe disk};
  \pic[ipe mark scale=3.0]
     at (322.745, 640.132) {ipe disk};
  \pic[ipe mark scale=3.0]
     at (316.108, 655.839) {ipe disk};
  \pic[ipe mark scale=3.0]
     at (299.848, 662.476) {ipe disk};
  \pic[ipe mark scale=3.0]
     at (316.219, 624.203) {ipe disk};
  \pic[ipe mark scale=1.0]
     at (299.958, 617.456) {ipe disk};
  \pic[ipe mark scale=3.0]
     at (284.141, 656.06) {ipe disk};
  \pic[ipe mark scale=1.0]
     at (277.282, 640.021) {ipe disk};
  \pic[ipe mark scale=1.0]
     at (283.919, 623.982) {ipe disk};
  \draw[{<[ipe arrow tiny]}-]
    (298.49, 641.409)
     -- (284, 656);
  \draw[-{>[ipe arrow tiny]}]
    (300, 640)
     -- (314.737, 654.308);
  \draw[-{>[ipe arrow tiny]}]
    (300, 640)
     -- (300.133, 660.441);
  \draw[-{>[ipe arrow tiny]}]
    (300, 640)
     -- (314.518, 625.465);
  \draw[-{>[ipe arrow tiny]}]
    (300, 640)
     -- (320.56, 639.897);
  \draw
    (284, 656) circle[radius=4];
  \pic[ipe mark scale=3.0]
     at (272, 668) {ipe disk};
  \draw[-{>[ipe arrow tiny]}]
    (284, 656)
     -- (273.548, 666.428);
  \pic[ipe mark scale=3.0]
     at (364.027, 640) {ipe disk};
  \pic[ipe mark scale=3.0]
     at (386.745, 640.132) {ipe disk};
  \pic[ipe mark scale=3.0]
     at (380.108, 655.839) {ipe disk};
  \pic[ipe mark scale=3.0]
     at (363.848, 662.476) {ipe disk};
  \pic[ipe mark scale=3.0]
     at (380.219, 624.203) {ipe disk};
  \pic[ipe mark scale=1.0]
     at (363.958, 617.456) {ipe disk};
  \pic[ipe mark scale=3.0]
     at (348.141, 656.06) {ipe disk};
  \pic[ipe mark scale=1.0]
     at (341.282, 640.021) {ipe disk};
  \pic[ipe mark scale=1.0]
     at (347.919, 623.982) {ipe disk};
  \draw[-{>[ipe arrow tiny]}]
    (364, 640)
     -- (349.512, 654.438);
  \draw[{<[ipe arrow tiny]}-]
    (365.475, 641.567)
     -- (380, 656);
  \draw[-{>[ipe arrow tiny]}]
    (364, 640)
     -- (363.957, 660.396);
  \draw[-{>[ipe arrow tiny]}]
    (364, 640)
     -- (378.571, 625.549);
  \draw[-{>[ipe arrow tiny]}]
    (364, 640)
     -- (384.36, 639.994);
  \draw
    (380, 656) circle[radius=4];
  \pic[ipe mark scale=3.0]
     at (336, 668) {ipe disk};
  \draw[-{>[ipe arrow tiny]}]
    (348, 656)
     -- (337.458, 666.524);
\end{tikzpicture}
}
	\caption{Linear trees ($\Ln$), star trees ($\Sn$) and quasi-star trees ($\Qn$) of $n$ vertices. Labels $0$ and $k$ in $\Lnk$ denote the distance of the labeled vertex from the same leaf. A circled dot marks a tree's root.}
	\label{fig:classes_of_trees}
\end{figure}

We choose linear trees to illustrate how one can instantiate Equation \ref{eq:ExpeDProj:closed_form}. In order to ease this task, we rewrite Equation \ref{eq:ExpeDProj:closed_form} using vectorial notation as
\begin{equation}
\label{eq:ExpeDProj:vectorial}
\ExpeDProj
    = \frac{1}{6}
    \left(
        -1 + 2\sum_{v\in V}\Nvert{v}\degree{v} + \sum_{v\in V} \Nvert{v}
    \right)
    = \frac{1}{6}
    \left(
        -1 + 2\overrightarrow{\Nvert{v}}\cdot\overrightarrow{\degree{v}} + \overrightarrow{\Nvert{v}}\cdot{\overrightarrow{1}}
    \right)
\end{equation}
For $\LnO$, we have
\begin{alignat*}{8}
&	\overrightarrow{\Nvert{v}}	&\quad=\quad&	(n,\; && n-1,\; && n-2,\; && \cdots,\; && 3,\; && 2,\; && 1) \\
&	\overrightarrow{\degree{v}}	&\quad=\quad&	(1,\; && 1,\;   && 1,\;   && \cdots,\; && 1,\; && 1,\; && 0)
\end{alignat*}
Then
\begin{equation}
\ExpeDProj[\LnO]
	= \frac{1}{6}\left( -1 + 2\sum_{i=2}^n i + \sum_{i=1}^n i \right)
	= \frac{(n - 1)(n + 2)}{4}
\end{equation}
For $\Lnk$ with $k>0$, we have
\begin{alignat*}{12}
&	\overrightarrow{\Nvert{v}}	&\quad=\quad&	(1,\; && 2,\; && \cdots,\; && k-1,\; && k,\; && n,\; && n-k-1,\; && n-k-2,\; && \cdots,\; && 2,\; && 1)\\
&	\overrightarrow{\degree{v}}	&\quad=\quad&	(0,\; && 1,\; && \cdots,\; && 1,\;   && 1,\; && 2,\; && 1,\;     && 1,\;     && \cdots,\; && 1,\; && 0)
\end{alignat*}
and then 
\begin{align}
\ExpeDProj[\Lnk]
	&= \frac{1}{6}
		\left[
			-1 +
			2\left( 2n + \sum_{j=2}^{k} j + \sum_{j=2}^{n-k-1} j \right)
			+
			\left( n + \sum_{j=1}^{k} j + \sum_{j=1}^{n-k-1} j \right)
		\right] \\
	&=
	\frac{(n - 1)(3n + 10) + 6k(k + 1 - n)}{12}
\end{align}
Regarding Equation \ref{eq:NProj:closed_form} for $\LnO$ and $\Lnk$ we have that
\begin{align}
\NProj[\LnO]
	&= (0 + 1)!\prod_{i=1}^{n-1} (1 + 1)!
	= 2^{n-1} \\
\NProj[\Lnk]
	&= (2+1)!(0+1)!(0+1)!\prod_{i=1}^{n-3} (1+1)!
	= 3 \cdot 2^{n-2}
\end{align}

\begin{table}
	\centering
	\caption{Instantiations of $\NProj$ and $\ExpeDProj$ for several classes of trees (Figure \ref{fig:classes_of_trees}).}
	\label{table:formulae_ExpeDProj_trees}
	\begin{tabular}{ccll}
		\toprule
		Class of tree	& $\Rtree$		& $\NProj$			& $\ExpeDProj$		\\
		
		\midrule
		
		Star 			& $\Snh$		& $n!$				& $(n^2 - 1)/3$ 	\\
						& $\Snl$ 		& $2(n-1)!$			& $n(2n - 1)/6$		\\
		
		\midrule
		
		Quasi star ($n\ge4$)
		                & $\Qnh$		& $2(n-1)!$			& $(2n^2 - 2n + 3)/6$ 	\\
						& $\Qne$		& $4(n-2)!$			& $(2n^2 - 2n + 3)/6$	\\
						& $\Qnb$		& $6(n-2)!$			& $(2n^2 - 3n + 7)/6$ 	\\
						& $\Qnhl$		& $4(n-2)!$			& $(2n^2 - 3n + 7)/6$ 	\\
		
		\midrule
		
		Linear 			& $\LnO$		& $2^{n-1}$			& $(n - 1)(n + 2)/4$	\\
						& $\Lnk$, $(k>0)$	& $3\cdot2^{n-2}$	& $[(n - 1)(3n + 10) + 6k(k + 1 - n)]/12$ \\
		
		\bottomrule
	\end{tabular}
\end{table}


\section{Maxima and minima}
\label{sec:maxima_minima}

In this section, we tackle the problem of computing the minima and characterizing the maxima of $\ExpeDProj$, both over all $n$-vertex root trees (keeping $n$ constant). In particular, we give a closed-form formula for the maximum value of $\ExpeDProj$ and characterize the trees that maximize it, as well as outline a dynamic programming algorithm to compute the minima. Henceforth, we use $\mathcal{T}_n$ to denote the set of $n$-vertex (unlabeled) rooted trees. Evidently, any tree that maximizes (resp. minimizes) $\ExpeDProj$ also maximizes (resp. minimizes) $\rexpe{\VDp{\Rtree}}$, thus we restrict our study to the former. Throughout this section, we use $\Nvert{i}$ to refer to the size of the subtree rooted at the $i$th child of the root for $1\le i\le \Rdegree$.

%


The construction of projective minimum linear arrangements has optimal substructure: optimal arrangements are composed of optimal arrangements of subtrees \cite{Hochberg2003a,Gildea2007a,Alemany2022a}. Similarly, the construction of $n$-vertex rooted trees $\Rtree\in\mathcal{T}_n$ that maximize (resp. minimize) $\ExpeDProj$ also has optimal substructure. The following lemma proves this claim.

\begin{lemma}
\label{lemma:optimal_substructure}
Let $\mathcal{T}_n$ be the set of all unlabeled rooted trees of $n$ vertices. $f(n)$, the optimal value of $\ExpeDProj$ satisfies
\begin{align}
	f(n)
	&=
	\OPTIM_{\Rtree\in\mathcal{T}_n} \{ \ExpeDProj \}
	\label{eq:optimal_rooted_trees:combinatorial_definition}
	\\
	&=
	\OPTIM_{0\le \Rdegree\le n - 1}
	\left\{
		\frac{\Rdegree(2n + 1) + n - 1}{6} +
		\OPTIM_{\substack{\Nvert{1} + \cdots + \Nvert{\Rdegree} = n - 1 \\ \Nvert{i}\ge 1 }}
		\left\{
			\sum_{i=1}^{\Rdegree} f(\Nvert{i})
		\right\}
	\right\}
	\label{eq:optimal_rooted_trees}
\end{align}
\end{lemma}
\begin{proof}
We can construct an $n$-vertex optimal tree using optimal subtrees. We can obtain the right hand side of Equation \ref{eq:optimal_rooted_trees} using the recurrence in Equation \ref{eq:ExpeDProj:recurrence}. An optimal $n$-vertex tree is one whose cost value is optimal among all optimal $n$-vertex trees whose root has fixed degree $\Rdegree\in\{1,\cdots,n-1\}$. Given a fixed root degree $\Rdegree$ such that $1\le \Rdegree\le n-1$, an optimal $n$-vertex rooted tree can be built by constructing an optimal $(n-1)$-vertex forest of $\Rdegree$ rooted trees, each of size $\Nvert{i}$ vertices and optimal among the $\Nvert{i}$-vertex trees. Choosing the $\Rdegree$ trees to be $\Nvert{i}$-optimal makes the sum of their costs, $\sum_i f(\Nvert{i})$, optimal.
\end{proof}
%

In subsequent paragraphs we use $f_M(n)$ to denote the {\em maximization} variant and $f_m(n)$ to denote the {\em minimization} variant of $f(n)$, respectively. Theorem \ref{thm:ExpeDProj_maximised_by_Snh} characterizes the maxima of $\ExpeDProj$. Perhaps not so surprisingly, the only maximum of $\ExpeDProj$ is obtained by $\Snh$.
%
\begin{proof}[Proof of Theorem \ref{thm:ExpeDProj_maximised_by_Snh} (stated on page \pageref{thm:ExpeDProj_maximised_by_Snh})]
We prove this by induction on $n$ using the formalization of the optimum in Equation \ref{eq:optimal_rooted_trees}. The base cases can be easily obtained by an exhaustive enumeration of the (unlabeled) rooted trees of $n$ vertices for some small $n$. Indeed, the only tree that maximizes $\ExpeDProj$ for $n\le2$ are the one-vertex tree and the two-vertex tree, which are both star trees.

In order to prove that $\ExpeDProj\le\ExpeDProj[\Snh]$ for $n\ge3$, it suffices to prove that
\begin{equation}
	\ExpeDProj[\Snh]
	=
	\frac{n^2 - 1}{3}
	>
	\max_{0\le \Rdegree\le n - 2}
	\left\{
		\frac{\Rdegree(2n + 1) + n - 1}{6} + L(n,\Rdegree)
	\right\}
\end{equation}
where
\begin{equation}
L(n,\Rdegree) =
	\max_{\substack{\Nvert{1} + \cdots + \Nvert{\Rdegree} = n - 1 \\ \Nvert{i}\ge 1 }}
		\left\{
			\sum_{i=1}^{\Rdegree} f_M(n_i)
		\right\}
\end{equation}
For this we need to know the maximum value of $L(n,\Rdegree)$. Applying the induction hypothesis (each maximum subtree of a tree of $n$ vertices is a star tree), we have that
\begin{equation}
L(n,\Rdegree)
	=
	\max_{\substack{\Nvert{1} + \cdots + \Nvert{\Rdegree} = n - 1 \\ \Nvert{i}\ge 1 }}
		\left\{ \sum_{i=1}^{\Rdegree} \frac{\Nvert{i}^2 - 1}{3} \right\} \\
	=
	\frac{1}{3}
	\left(
		- \Rdegree + 
		\max_{\substack{\Nvert{1} + \cdots + \Nvert{\Rdegree} = n - 1 \\ \Nvert{i}\ge 1 }}
			\left\{ \sum_{i=1}^{\Rdegree} \Nvert{i}^2 \right\}
	\right)
\end{equation}
Notice that any vector $(n_1, \cdots, n_i, \cdots, n_{\Rdegree})$, such that $1\le n_1 \le \cdots \le n_{\Rdegree}$, can be transformed into another vector $(n_1, \cdots, n_i - 1, \cdots, n_{\Rdegree} + 1)$, for any $n_i\ge 2$, such that the sum of squared components is strictly larger while the sum of the components remains constant. Therefore, the maximum sum of squares is obtained by choosing $n_{\Rdegree}=n-\Rdegree$ and $n_i=1$ for $1\le i< \Rdegree$, yielding
\begin{equation}
L(n,\Rdegree)
	= \frac{1}{3}\left[ -\Rdegree + (n - \Rdegree)^2 + (\Rdegree - 1) \right] \\
	= \frac{(n - \Rdegree)^2 - 1}{3}
\end{equation}
and the theorem holds if, and only if
\begin{equation}
\frac{n^2 - 1}{3}
	>
	\max_{0\le \Rdegree\le n - 2}
	\left\{
		\frac{\Rdegree(2n + 1) + n - 1}{6} + \frac{(n - \Rdegree)^2 - 1}{3}
	\right\}
\end{equation}
After rearranging the terms that do not depend on $d$ to the left-hand side we obtain
\begin{equation}
-n + 1
	>
	\max_{0\le \Rdegree\le n - 2}
	\left\{
		\Rdegree(2\Rdegree - 2n + 1)
	\right\}
\end{equation}
The right hand side of the inequality is maximized for $\Rdegree=n - 2$. Since this last inequality holds true when $n>5/2$, we are done.
\end{proof}
%

It is easy to see that $\ExpeDProj$ is bounded above by the expected value of $\VD{\Ftree}$ as stated in Corollary \ref{cor:upper_bound_E_pr_D} and justified in its proof below.

%
\begin{proof}[Proof of Corollary \ref{cor:upper_bound_E_pr_D} (stated on page \pageref{cor:upper_bound_E_pr_D})]
Due to Theorem \ref{thm:ExpeDProj_maximised_by_Snh} the maximum value of $\ExpeDProj$ is maximized by $\Snh$, formally $\ExpeDProj\le\ExpeDProj[\Snh]$, which, in turn, becomes $\ExpeDProj[\Snh]=(n^2 - 1)/3$, as shown in Table \ref{table:formulae_ExpeDProj_trees}. Finally, recall that $\ExpeDUnc=(n^2 - 1)/3$ (Equation \ref{eq:ExpeDUnconstrained:closed_form}), and thus $\ExpeDProj[\Snh]=\ExpeDUnc$.
\end{proof}
%

We devised a dynamic programming algorithm based on Lemma \ref{lemma:optimal_substructure} to calculate the distinct trees up to isomorphism minimizing $\ExpeDProj$. The method to obtain said values and trees is outlined in Algorithm \ref{alg:dynamic_programming}. That algorithm has two parameters: $n$, the number of vertices, and $H$ a hash table whose keys are natural numbers $k$ and the value associated to each key is a pair formed by $f_m(k)$ and the $k$-vertex trees $\Rtree$ that attain that $f_m(k)$. Notice that $n$ is an input parameter, while $H$ is an input/output parameter. In order to calculate the minimum $n$-vertex trees, the values of the parameters of the first call to Algorithm \ref{alg:dynamic_programming} are the value $n$ and an empty $H$. Now, Algorithm \ref{alg:dynamic_programming} is a direct evaluation of Equation \ref{eq:optimal_rooted_trees}, that is, for every value of out-degree of the root $\Rdegree$ ($1\le \Rdegree\le n-1$), it computes the value $f_m(n)$ by finding the partition of $n-1$ into $d$ summands that minimizes
\begin{equation}
\frac{\Rdegree(2n + 1) + n - 1}{6} +
	\OPTIM_{\substack{\Nvert{1} + \cdots + \Nvert{\Rdegree} = n - 1 \\ \Nvert{i}\ge 1 }}
	\left\{
		\sum_{i=1}^{\Rdegree} f_m(\Nvert{i})
	\right\}
\end{equation}
where $f_m(\Nvert{i})$ is calculated recursively and stored in the hash table $H$. Therefore, the algorithm's complexity heavily depends on the number of partitions of $n-1$, denoted as $p(n-1)$. The worst-case complexity of Algorithm \ref{alg:dynamic_programming}, then, is superpolynomial in $n$ due to the exponential nature of $p(n)$ \cite{Ramanujan1918a}. Finally, notice that the `Modified Cartesian' product in line \ref{algline:cartesian_product} of Algorithm \ref{alg:dynamic_programming} must ensure that no repeated trees are produced. Repeated trees arise in the standard Cartesian product due to the fact that some partitions may have repeated parts. As an example, consider the partition of $37$ with two repeated parts $(11,13,13)$; such parts have $1$ and $2$ non-isomorphic minimum trees respectively (Figure \ref{fig:minimum_trees}). Let $t_{11}=\{\Ftree_A\}$ be the unique $11$-vertex minimum tree, and $t_{13}=\{\Ftree_B,\Ftree_C\}$ be the two $13$-vertex minimum trees. The Cartesian product $t_{11}\times t_{13}\times t_{13}$ produces four forests, two of them being isomorphic: $(\Ftree_A,\Ftree_B,\Ftree_C)$ and $(\Ftree_A,\Ftree_C,\Ftree_B)$; the other two forests are $(\Ftree_A,\Ftree_B,\Ftree_B)$ and $(\Ftree_A,\Ftree_C,\Ftree_C)$. In order to obtain the unique $n$-vertex trees (up to isomorphism) that attain $f_m(n)$, we modify the standard Cartesian product. Let $\{T^*\}^{(i)}$ be the list of $n_i$-vertex minimum trees. Thus, one element in the Cartesian product is obtained by choosing the trees in the $j_1$th, $\cdots$, $j_{\Rdegree}$th positions of the lists, that is,
\begin{equation}
(\{T^*\}^{(1)}_{j_1},\cdots,\{T^*\}^{(\Rdegree)}_{j_{\Rdegree}})\in \{T^*\}^{(1)}\times\cdots\times \{T^*\}^{(\Rdegree)}
\end{equation}
where $1\le j_i\le |\{T^*\}^{(i)}|$ for all $i\in[1,\Rdegree]$. Now, tree uniqueness is ensured by forcing indices of every pair of lists $\{T^*\}^{(i)},\{T^*\}^{(i+1)}$ such that $n_i=n_{i+1}$ to be $j_i\le j_{i+1}$.

In this article, we do not characterize the minima of $\ExpeDProj$ since, unlike the amount of its maxima, which shows a clear regularity, the amount of its minima varies with $n$ in a non-monotonic fashion. Table \ref{table:minimum_trees} shows that the number of these minima oscillates between 1 and 2 for $n \leq 20$ and Figure \ref{fig:amount_minimum_trees} shows the amount of minimum trees in linear-log scale for $n\le178$. Figure \ref{fig:minimum_trees} suggests that the shape of the trees does not seem to fit into a simple class. Moreover, Figure \ref{fig:values_minimum} shows the values of $f_m(n)$ in log-linear scale. The straight line that is found in that scale for sufficiently large $n$ suggests $f_m(n) = O(n\log{n})$ asymptotic behavior to be confirmed in future research.

\begin{algorithm}
	\caption{Calculate the minimum value and trees of $\ExpeDProj$.}
	\label{alg:dynamic_programming}
	\DontPrintSemicolon
	
	\KwIn{$n\in\nats$, the number of vertices of the trees. $H$ is a hash table whose keys are natural numbers $k\in\nats$ and the value of each key is the pair formed by $f_m(k)$ and the $k$-vertex trees $\Rtree$ for which $\ExpeDProj=f_m(k)$.}
	\KwOut{The hash table $H$ updated to contain the results for $n'\le n$.}
	
	\SetKwProg{Fn}{Function}{ is}{end}
	\Fn{\textsc{Minimum\_E\_Projective}$(n, H)$} {
		\lIf {hash table $H$ contains key $n$} {
			Stop.
		}
	
		$f_m \gets (n^2 - 1)/3$ \tcp{{\small The minimum value initialized at a maximum}}
		$T^* \gets \emptyset$ \tcp{{\small The set of $n$-optimal trees}}
		\If {$n\le2$} {
			$T^*\gets\{ \text{the only rooted tree } \Rtree \text{ of } n \text{ vertices} \}$ \;
			$f_m\gets \ExpeDProj$ \;
		}
		\Else {
			\For {each $\Rdegree\in[1,n-1]$} {
				$\mathcal{P}\gets$ The set of partitions of `$n-1$' in $\Rdegree$ summands. \;
				\For {each partition $P=\{\Nvert{1},\cdots,\Nvert{\Rdegree}\}\in\mathcal{P}$} {
				
					\tcp{{\small Initialize the cost of the trees}}
					$C\gets (\Rdegree(2n + 1) + n - 1)/6$ \;
					
					\tcp{{\small Evaluate the partition recursively}}
					\For {each $\Nvert{i}\in P$} {
					    \tcp{{\small Calculate the $\Nvert{i}$-minimum trees}}
						\textsc{Minimum\_E\_Projective}$(\Nvert{i}, H)$\;
						
						$f_m^{(i)}, \{T^*\}^{(i)} \gets H[\Nvert{i}]$ \tcp{\small{Retrieve the trees from $H$}}
						
						$C\gets C + f_m^{(i)}$ \;
						\If {$C > f_m$} {
							Stop evaluating $P$ and move on to the next partition
						}
					}
					
					\tcp{{\small Modified Cartesian product of all $\{T^*\}^{(i)}$ (see text)}}
					$F\gets \{T^*\}^{(1)} \times \cdots \times \{T^*\}^{(d)}$ \label{algline:cartesian_product} \;
					
					\tcp{{\small $W$ is the set of $(n,d,P)$-optimal trees}}
					$W^*\gets$ add a root to every forest in $F$ \;
					
					\lIf {$C < f_m$} {
						$f_m\gets C$, $T^*\gets W^*$
					}
					\lElseIf {$C = f_m$} {
						$T^* \gets T^* \cup W^*$
					}
				}
			}
		}
		Store $\langle f_m, T^* \rangle$ in $H[n]$
	}
\end{algorithm}

\begin{table}
	\caption{The columns indicate, from left to right, the number of (unlabeled) rooted trees \cite{OEIS_UlabRootTrees}, the number of trees that minimize $\ExpeDProj$, and the value of $\ExpeDProj$ for such trees. The trees yielding these values are displayed in Figure \ref{fig:minimum_trees}. The horizontal line over a decimal digit denotes it is an infinitely repeating digit; thus $1/3=0.\overline{3}$.}
	\label{table:minimum_trees}
	\centering
	{\small
	\begin{tabular}[t]{rrcl}
	$n$ & \multicolumn{1}{c}{\# trees} & \# opt trees & $\ExpeDProj$ \\
	\midrule
	1 & 1 & 1 & $0$ \\
	2 & 1 & 1 & $1$ \\
	3 & 1 & 1 & $2.5$ \\
	4 & 4 & 2 & $4.5$ \\
	5 & 9 & 1 & $6.\overline{3}$ \\
	6 & 20 & 1 & $8.\overline{6}$ \\
	7 & 48 & 1 & $11$ \\
	8 & 115 & 2 & $13.8\overline{3}$ \\
	9 & 286 & 1 & $16.5$ \\
	10 & 719 & 2 & $19.\overline{3}$ \\
	\end{tabular}
	\quad
	\begin{tabular}[t]{rrcl}
	$n$ & \multicolumn{1}{c}{\# trees} & \# opt trees & $\ExpeDProj$ \\
	\midrule
	11 & 1,842 & 1 & $22$ \\
	12 & 4,766 & 1 & $25.1\overline{6}$ \\
	13 & 12,486 & 2 & $28.\overline{3}$ \\
	14 & 32,973 & 1 & $31.5$ \\
	15 & 87,811 & 1 & $34.\overline{6}$ \\
	16 & 235,381 & 1 & $38$ \\
	17 & 634,847 & 1 & $41.5$ \\
	18 & 1,721,159 & 2 & $45$ \\
	19 & 4,688,676 & 2 & $48.5$ \\
	20 & 12,826,228 & 2 & $52$ \\
	\end{tabular}
	}
\end{table}

\begin{figure}
	\centering
	\includegraphics[scale=0.69]{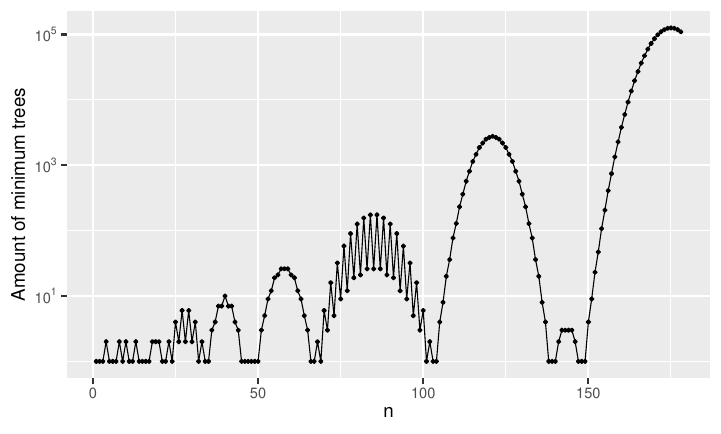}
	\caption{Amount of distinct $n$-vertex (unlabeled) trees $\Rtree$ that minimize $\ExpeDProj$ (that is, $\ExpeDProj=f_m(n)$) for $n\le 178$. Notice the logarithmic scale for the $y$-axis. The amount is computed via Algorithm \ref{alg:dynamic_programming}.}
	\label{fig:amount_minimum_trees}
\end{figure}

\begin{figure}
	\centering
\scalebox{0.69}{
\begin{tikzpicture}[ipe stylesheet]
  \pic
     at (48, 816) {ipe disk};
  \pic
     at (48, 800) {ipe disk};
  \pic
     at (48, 784) {ipe disk};
  \pic
     at (80, 816) {ipe disk};
  \pic
     at (80, 800) {ipe disk};
  \pic
     at (80, 784) {ipe disk};
  \pic
     at (80, 768) {ipe disk};
  \pic
     at (104, 816) {ipe disk};
  \pic
     at (96, 800) {ipe disk};
  \pic
     at (112, 800) {ipe disk};
  \pic
     at (112, 784) {ipe disk};
  \pic
     at (144, 784) {ipe disk};
  \pic
     at (144, 800) {ipe disk};
  \pic
     at (152, 816) {ipe disk};
  \pic
     at (160, 800) {ipe disk};
  \pic
     at (160, 784) {ipe disk};
  \draw
    (48, 816)
     -- (48, 800)
     -- (48, 784);
  \draw
    (80, 816)
     -- (80, 800)
     -- (80, 784)
     -- (80, 768);
  \draw
    (96, 800)
     -- (104, 816)
     -- (112, 800)
     -- (112, 784);
  \draw
    (144, 784)
     -- (144, 800)
     -- (152, 816)
     -- (160, 800)
     -- (160, 784);
  \pic
     at (192, 768) {ipe disk};
  \pic
     at (192, 784) {ipe disk};
  \pic
     at (192, 800) {ipe disk};
  \pic
     at (200, 816) {ipe disk};
  \pic
     at (208, 800) {ipe disk};
  \pic
     at (208, 784) {ipe disk};
  \pic
     at (240, 768) {ipe disk};
  \pic
     at (240, 784) {ipe disk};
  \pic
     at (240, 800) {ipe disk};
  \pic
     at (248, 816) {ipe disk};
  \pic
     at (256, 800) {ipe disk};
  \pic
     at (256, 784) {ipe disk};
  \pic
     at (256, 768) {ipe disk};
  \pic
     at (288, 752) {ipe disk};
  \pic
     at (288, 768) {ipe disk};
  \pic
     at (288, 784) {ipe disk};
  \pic
     at (288, 800) {ipe disk};
  \pic
     at (296, 816) {ipe disk};
  \pic
     at (304, 800) {ipe disk};
  \pic
     at (304, 784) {ipe disk};
  \pic
     at (304, 768) {ipe disk};
  \pic
     at (320, 768) {ipe disk};
  \pic
     at (320, 784) {ipe disk};
  \pic
     at (336, 784) {ipe disk};
  \pic
     at (328, 800) {ipe disk};
  \pic
     at (340, 816) {ipe disk};
  \pic
     at (352, 768) {ipe disk};
  \pic
     at (352, 784) {ipe disk};
  \pic
     at (352, 800) {ipe disk};
  \draw
    (192, 768)
     -- (192, 784)
     -- (192, 800)
     -- (200, 816)
     -- (208, 800)
     -- (208, 784);
  \draw
    (240, 768)
     -- (240, 784)
     -- (240, 800)
     -- (248, 816)
     -- (256, 800)
     -- (256, 784)
     -- (256, 768);
  \draw
    (288, 752)
     -- (288, 768)
     -- (288, 784)
     -- (288, 800)
     -- (296, 816)
     -- (304, 800)
     -- (304, 784)
     -- (304, 768);
  \pic
     at (384, 768) {ipe disk};
  \pic
     at (384, 784) {ipe disk};
  \pic
     at (400, 784) {ipe disk};
  \pic
     at (392, 800) {ipe disk};
  \pic
     at (404, 816) {ipe disk};
  \pic
     at (416, 768) {ipe disk};
  \pic
     at (416, 784) {ipe disk};
  \pic
     at (416, 800) {ipe disk};
  \pic
     at (400, 768) {ipe disk};
  \pic
     at (448, 768) {ipe disk};
  \pic
     at (448, 784) {ipe disk};
  \pic
     at (464, 784) {ipe disk};
  \pic
     at (456, 800) {ipe disk};
  \pic
     at (468, 816) {ipe disk};
  \pic
     at (480, 768) {ipe disk};
  \pic
     at (480, 784) {ipe disk};
  \pic
     at (480, 800) {ipe disk};
  \pic
     at (464, 768) {ipe disk};
  \pic
     at (480, 752) {ipe disk};
  \pic
     at (496, 768) {ipe disk};
  \pic
     at (496, 784) {ipe disk};
  \pic
     at (512, 784) {ipe disk};
  \pic
     at (504, 800) {ipe disk};
  \pic
     at (520, 816) {ipe disk};
  \pic
     at (528, 768) {ipe disk};
  \pic
     at (528, 784) {ipe disk};
  \pic
     at (536, 800) {ipe disk};
  \pic
     at (512, 768) {ipe disk};
  \pic
     at (544, 784) {ipe disk};
  \node[ipe node]
     at (36, 824) {$n=3$};
  \node[ipe node]
     at (84, 824) {$n=4$};
  \node[ipe node]
     at (140, 824) {$n=5$};
  \node[ipe node]
     at (188, 824) {$n=6$};
  \node[ipe node]
     at (236, 824) {$n=7$};
  \node[ipe node]
     at (308, 824) {$n=8$};
  \node[ipe node]
     at (388, 824) {$n=9$};
  \node[ipe node]
     at (476, 824) {$n=10$};
  \pic
     at (80, 680) {ipe disk};
  \pic
     at (80, 696) {ipe disk};
  \pic
     at (96, 696) {ipe disk};
  \pic
     at (88, 712) {ipe disk};
  \pic
     at (104, 728) {ipe disk};
  \pic
     at (112, 680) {ipe disk};
  \pic
     at (112, 696) {ipe disk};
  \pic
     at (120, 712) {ipe disk};
  \pic
     at (96, 680) {ipe disk};
  \pic
     at (128, 696) {ipe disk};
  \pic
     at (128, 680) {ipe disk};
  \node[ipe node]
     at (88, 736) {$n=11$};
  \pic
     at (160, 680) {ipe disk};
  \pic
     at (160, 696) {ipe disk};
  \pic
     at (176, 696) {ipe disk};
  \pic
     at (168, 712) {ipe disk};
  \pic
     at (184, 728) {ipe disk};
  \pic
     at (192, 680) {ipe disk};
  \pic
     at (192, 696) {ipe disk};
  \pic
     at (200, 712) {ipe disk};
  \pic
     at (176, 680) {ipe disk};
  \pic
     at (208, 696) {ipe disk};
  \pic
     at (208, 680) {ipe disk};
  \node[ipe node]
     at (168, 736) {$n=12$};
  \pic
     at (160, 664) {ipe disk};
  \pic
     at (240, 680) {ipe disk};
  \pic
     at (240, 696) {ipe disk};
  \pic
     at (256, 696) {ipe disk};
  \pic
     at (248, 712) {ipe disk};
  \pic
     at (264, 728) {ipe disk};
  \pic
     at (272, 680) {ipe disk};
  \pic
     at (272, 696) {ipe disk};
  \pic
     at (280, 712) {ipe disk};
  \pic
     at (256, 680) {ipe disk};
  \pic
     at (288, 696) {ipe disk};
  \pic
     at (288, 680) {ipe disk};
  \node[ipe node]
     at (280, 736) {$n=13$};
  \pic
     at (240, 664) {ipe disk};
  \pic
     at (256, 664) {ipe disk};
  \pic
     at (304, 680) {ipe disk};
  \pic
     at (304, 696) {ipe disk};
  \pic
     at (320, 696) {ipe disk};
  \pic
     at (312, 712) {ipe disk};
  \pic
     at (328, 728) {ipe disk};
  \pic
     at (336, 680) {ipe disk};
  \pic
     at (336, 696) {ipe disk};
  \pic
     at (344, 712) {ipe disk};
  \pic
     at (320, 680) {ipe disk};
  \pic
     at (352, 696) {ipe disk};
  \pic
     at (352, 680) {ipe disk};
  \pic
     at (304, 664) {ipe disk};
  \pic
     at (336, 664) {ipe disk};
  \pic
     at (384, 680) {ipe disk};
  \pic
     at (384, 696) {ipe disk};
  \pic
     at (400, 696) {ipe disk};
  \pic
     at (392, 712) {ipe disk};
  \pic
     at (408, 728) {ipe disk};
  \pic
     at (416, 680) {ipe disk};
  \pic
     at (416, 696) {ipe disk};
  \pic
     at (424, 712) {ipe disk};
  \pic
     at (400, 680) {ipe disk};
  \pic
     at (432, 696) {ipe disk};
  \pic
     at (432, 680) {ipe disk};
  \node[ipe node]
     at (392, 736) {$n=14$};
  \pic
     at (384, 664) {ipe disk};
  \pic
     at (400, 664) {ipe disk};
  \pic
     at (416, 664) {ipe disk};
  \pic
     at (464, 680) {ipe disk};
  \pic
     at (464, 696) {ipe disk};
  \pic
     at (480, 696) {ipe disk};
  \pic
     at (472, 712) {ipe disk};
  \pic
     at (488, 728) {ipe disk};
  \pic
     at (496, 680) {ipe disk};
  \pic
     at (496, 696) {ipe disk};
  \pic
     at (504, 712) {ipe disk};
  \pic
     at (480, 680) {ipe disk};
  \pic
     at (512, 696) {ipe disk};
  \pic
     at (512, 680) {ipe disk};
  \node[ipe node]
     at (472, 736) {$n=15$};
  \pic
     at (464, 664) {ipe disk};
  \pic
     at (480, 664) {ipe disk};
  \pic
     at (496, 664) {ipe disk};
  \pic
     at (512, 664) {ipe disk};
  \pic
     at (80, 592) {ipe disk};
  \pic
     at (80, 608) {ipe disk};
  \pic
     at (96, 608) {ipe disk};
  \pic
     at (88, 624) {ipe disk};
  \pic
     at (120, 640) {ipe disk};
  \pic
     at (112, 592) {ipe disk};
  \pic
     at (112, 608) {ipe disk};
  \pic
     at (120, 624) {ipe disk};
  \pic
     at (96, 592) {ipe disk};
  \pic
     at (128, 608) {ipe disk};
  \pic
     at (128, 592) {ipe disk};
  \node[ipe node]
     at (104, 648) {$n=16$};
  \pic
     at (144, 608) {ipe disk};
  \pic
     at (160, 608) {ipe disk};
  \pic
     at (144, 592) {ipe disk};
  \pic
     at (160, 592) {ipe disk};
  \pic
     at (152, 624) {ipe disk};
  \pic
     at (208, 592) {ipe disk};
  \pic
     at (208, 608) {ipe disk};
  \pic
     at (224, 608) {ipe disk};
  \pic
     at (216, 624) {ipe disk};
  \pic
     at (248, 640) {ipe disk};
  \pic
     at (240, 592) {ipe disk};
  \pic
     at (240, 608) {ipe disk};
  \pic
     at (248, 624) {ipe disk};
  \pic
     at (224, 592) {ipe disk};
  \pic
     at (256, 608) {ipe disk};
  \pic
     at (256, 592) {ipe disk};
  \node[ipe node]
     at (232, 648) {$n=17$};
  \pic
     at (272, 608) {ipe disk};
  \pic
     at (288, 608) {ipe disk};
  \pic
     at (272, 592) {ipe disk};
  \pic
     at (288, 592) {ipe disk};
  \pic
     at (280, 624) {ipe disk};
  \pic
     at (208, 576) {ipe disk};
  \pic
     at (320, 592) {ipe disk};
  \pic
     at (320, 608) {ipe disk};
  \pic
     at (336, 608) {ipe disk};
  \pic
     at (328, 624) {ipe disk};
  \pic
     at (360, 640) {ipe disk};
  \pic
     at (352, 592) {ipe disk};
  \pic
     at (352, 608) {ipe disk};
  \pic
     at (360, 624) {ipe disk};
  \pic
     at (336, 592) {ipe disk};
  \pic
     at (368, 608) {ipe disk};
  \pic
     at (368, 592) {ipe disk};
  \node[ipe node]
     at (388, 648) {$n=18$};
  \pic
     at (384, 608) {ipe disk};
  \pic
     at (400, 608) {ipe disk};
  \pic
     at (384, 592) {ipe disk};
  \pic
     at (400, 592) {ipe disk};
  \pic
     at (392, 624) {ipe disk};
  \pic
     at (320, 576) {ipe disk};
  \pic
     at (336, 576) {ipe disk};
  \pic
     at (416, 592) {ipe disk};
  \pic
     at (416, 608) {ipe disk};
  \pic
     at (432, 608) {ipe disk};
  \pic
     at (424, 624) {ipe disk};
  \pic
     at (456, 640) {ipe disk};
  \pic
     at (448, 592) {ipe disk};
  \pic
     at (448, 608) {ipe disk};
  \pic
     at (456, 624) {ipe disk};
  \pic
     at (432, 592) {ipe disk};
  \pic
     at (464, 608) {ipe disk};
  \pic
     at (464, 592) {ipe disk};
  \pic
     at (480, 608) {ipe disk};
  \pic
     at (496, 608) {ipe disk};
  \pic
     at (480, 592) {ipe disk};
  \pic
     at (496, 592) {ipe disk};
  \pic
     at (488, 624) {ipe disk};
  \pic
     at (416, 576) {ipe disk};
  \pic
     at (448, 576) {ipe disk};
  \pic
     at (96, 504) {ipe disk};
  \pic
     at (96, 520) {ipe disk};
  \pic
     at (112, 520) {ipe disk};
  \pic
     at (104, 536) {ipe disk};
  \pic
     at (136, 552) {ipe disk};
  \pic
     at (128, 504) {ipe disk};
  \pic
     at (128, 520) {ipe disk};
  \pic
     at (136, 536) {ipe disk};
  \pic
     at (112, 504) {ipe disk};
  \pic
     at (144, 520) {ipe disk};
  \pic
     at (144, 504) {ipe disk};
  \node[ipe node]
     at (168, 560) {$n=19$};
  \pic
     at (160, 520) {ipe disk};
  \pic
     at (176, 520) {ipe disk};
  \pic
     at (160, 504) {ipe disk};
  \pic
     at (176, 504) {ipe disk};
  \pic
     at (168, 536) {ipe disk};
  \pic
     at (96, 488) {ipe disk};
  \pic
     at (112, 488) {ipe disk};
  \pic
     at (192, 504) {ipe disk};
  \pic
     at (192, 520) {ipe disk};
  \pic
     at (208, 520) {ipe disk};
  \pic
     at (200, 536) {ipe disk};
  \pic
     at (232, 552) {ipe disk};
  \pic
     at (224, 504) {ipe disk};
  \pic
     at (224, 520) {ipe disk};
  \pic
     at (232, 536) {ipe disk};
  \pic
     at (208, 504) {ipe disk};
  \pic
     at (240, 520) {ipe disk};
  \pic
     at (240, 504) {ipe disk};
  \pic
     at (256, 520) {ipe disk};
  \pic
     at (272, 520) {ipe disk};
  \pic
     at (256, 504) {ipe disk};
  \pic
     at (272, 504) {ipe disk};
  \pic
     at (264, 536) {ipe disk};
  \pic
     at (192, 488) {ipe disk};
  \pic
     at (224, 488) {ipe disk};
  \pic
     at (128, 488) {ipe disk};
  \pic
     at (256, 488) {ipe disk};
  \pic
     at (304, 504) {ipe disk};
  \pic
     at (304, 520) {ipe disk};
  \pic
     at (320, 520) {ipe disk};
  \pic
     at (312, 536) {ipe disk};
  \pic
     at (344, 552) {ipe disk};
  \pic
     at (336, 504) {ipe disk};
  \pic
     at (336, 520) {ipe disk};
  \pic
     at (344, 536) {ipe disk};
  \pic
     at (320, 504) {ipe disk};
  \pic
     at (352, 520) {ipe disk};
  \pic
     at (352, 504) {ipe disk};
  \node[ipe node]
     at (376, 560) {$n=20$};
  \pic
     at (368, 520) {ipe disk};
  \pic
     at (384, 520) {ipe disk};
  \pic
     at (368, 504) {ipe disk};
  \pic
     at (384, 504) {ipe disk};
  \pic
     at (376, 536) {ipe disk};
  \pic
     at (304, 488) {ipe disk};
  \pic
     at (320, 488) {ipe disk};
  \pic
     at (400, 504) {ipe disk};
  \pic
     at (400, 520) {ipe disk};
  \pic
     at (416, 520) {ipe disk};
  \pic
     at (408, 536) {ipe disk};
  \pic
     at (440, 552) {ipe disk};
  \pic
     at (432, 504) {ipe disk};
  \pic
     at (432, 520) {ipe disk};
  \pic
     at (440, 536) {ipe disk};
  \pic
     at (416, 504) {ipe disk};
  \pic
     at (448, 520) {ipe disk};
  \pic
     at (448, 504) {ipe disk};
  \pic
     at (464, 520) {ipe disk};
  \pic
     at (480, 520) {ipe disk};
  \pic
     at (464, 504) {ipe disk};
  \pic
     at (480, 504) {ipe disk};
  \pic
     at (472, 536) {ipe disk};
  \pic
     at (400, 488) {ipe disk};
  \pic
     at (432, 488) {ipe disk};
  \pic
     at (336, 488) {ipe disk};
  \pic
     at (464, 488) {ipe disk};
  \pic
     at (352, 488) {ipe disk};
  \pic
     at (416, 488) {ipe disk};
  \draw
    (336, 784)
     -- (328, 800)
     -- (320, 784)
     -- (320, 768);
  \draw
    (352, 768)
     -- (352, 784)
     -- (352, 800);
  \draw
    (352, 800)
     -- (340, 816);
  \draw
    (328, 800)
     -- (340, 816);
  \draw
    (384, 768)
     -- (384, 784)
     -- (392, 800)
     -- (400, 784)
     -- (400, 768);
  \draw
    (416, 768)
     -- (416, 784)
     -- (416, 800);
  \draw
    (416, 800)
     -- (404, 816);
  \draw
    (392, 800)
     -- (404, 816);
  \draw
    (448, 768)
     -- (448, 784)
     -- (456, 800)
     -- (464, 784)
     -- (464, 768);
  \draw
    (480, 752)
     -- (480, 768)
     -- (480, 784)
     -- (480, 800);
  \draw
    (496, 768)
     -- (496, 784)
     -- (504, 800)
     -- (512, 784)
     -- (512, 768);
  \draw
    (528, 768)
     -- (528, 784)
     -- (536, 800)
     -- (544, 784);
  \draw
    (536, 800)
     -- (520, 816);
  \draw
    (504, 800)
     -- (520, 816);
  \draw
    (480, 800)
     -- (468, 816);
  \draw
    (456, 800)
     -- (468, 816);
  \draw
    (80, 680)
     -- (80, 696)
     -- (88, 712)
     -- (96, 696)
     -- (96, 680);
  \draw
    (112, 680)
     -- (112, 696)
     -- (120, 712)
     -- (128, 696)
     -- (128, 680);
  \draw
    (120, 712)
     -- (104, 728);
  \draw
    (88, 712)
     -- (104, 728);
  \draw
    (160, 664)
     -- (160, 680)
     -- (160, 696)
     -- (168, 712)
     -- (176, 696)
     -- (176, 680);
  \draw
    (192, 680)
     -- (192, 696)
     -- (200, 712)
     -- (208, 696)
     -- (208, 680);
  \draw
    (200, 712)
     -- (184, 728);
  \draw
    (184, 728)
     -- (168, 712);
  \draw
    (240, 664)
     -- (240, 680)
     -- (240, 696)
     -- (248, 712)
     -- (256, 696)
     -- (256, 680)
     -- (256, 664);
  \draw
    (272, 680)
     -- (272, 696)
     -- (280, 712)
     -- (288, 696)
     -- (288, 680);
  \draw
    (280, 712)
     -- (264, 728);
  \draw
    (264, 728)
     -- (248, 712);
  \draw
    (320, 680)
     -- (320, 696)
     -- (312, 712)
     -- (304, 696)
     -- (304, 680)
     -- (304, 664);
  \draw
    (336, 664)
     -- (336, 680)
     -- (336, 696)
     -- (344, 712)
     -- (352, 696)
     -- (352, 680);
  \draw
    (344, 712)
     -- (328, 728);
  \draw
    (328, 728)
     -- (312, 712);
  \draw
    (384, 664)
     -- (384, 680)
     -- (384, 696)
     -- (392, 712)
     -- (400, 696)
     -- (400, 680)
     -- (400, 664);
  \draw
    (416, 664)
     -- (416, 680)
     -- (416, 696)
     -- (424, 712)
     -- (432, 696)
     -- (432, 680);
  \draw
    (424, 712)
     -- (408, 728);
  \draw
    (408, 728)
     -- (392, 712);
  \draw
    (464, 664)
     -- (464, 680)
     -- (464, 696)
     -- (472, 712)
     -- (480, 696)
     -- (480, 680)
     -- (480, 664);
  \draw
    (496, 664)
     -- (496, 680)
     -- (496, 696)
     -- (504, 712)
     -- (512, 696)
     -- (512, 680)
     -- (512, 664);
  \draw
    (504, 712)
     -- (488, 728);
  \draw
    (488, 728)
     -- (472, 712);
  \draw
    (80, 592)
     -- (80, 608)
     -- (88, 624)
     -- (96, 608)
     -- (96, 592);
  \draw
    (112, 592)
     -- (112, 608)
     -- (120, 624)
     -- (128, 608)
     -- (128, 592);
  \draw
    (144, 592)
     -- (144, 608)
     -- (152, 624)
     -- (160, 608)
     -- (160, 592);
  \draw
    (120, 624)
     -- (120, 640);
  \draw
    (88, 624)
     -- (120, 640);
  \draw
    (120, 640)
     -- (152, 624);
  \draw
    (208, 576)
     -- (208, 592)
     -- (208, 608)
     -- (216, 624)
     -- (224, 608)
     -- (224, 592);
  \draw
    (240, 592)
     -- (240, 608)
     -- (248, 624)
     -- (256, 608)
     -- (256, 592);
  \draw
    (272, 592)
     -- (272, 608)
     -- (280, 624)
     -- (288, 608)
     -- (288, 592);
  \draw
    (280, 624)
     -- (248, 640);
  \draw
    (248, 624)
     -- (248, 640);
  \draw
    (216, 624)
     -- (248, 640);
  \draw
    (320, 576)
     -- (320, 592)
     -- (320, 608)
     -- (328, 624)
     -- (336, 608)
     -- (336, 592)
     -- (336, 576);
  \draw
    (368, 592)
     -- (368, 608)
     -- (360, 624)
     -- (352, 608)
     -- (352, 592);
  \draw
    (384, 592)
     -- (384, 608)
     -- (392, 624)
     -- (400, 608)
     -- (400, 592);
  \draw
    (416, 576)
     -- (416, 592)
     -- (416, 608)
     -- (424, 624)
     -- (432, 608)
     -- (432, 592);
  \draw
    (448, 576)
     -- (448, 592)
     -- (448, 608)
     -- (456, 624)
     -- (464, 608)
     -- (464, 592);
  \draw
    (480, 592)
     -- (480, 608)
     -- (488, 624)
     -- (496, 608)
     -- (496, 592);
  \draw
    (96, 488)
     -- (96, 504)
     -- (96, 520)
     -- (104, 536)
     -- (112, 520)
     -- (112, 504)
     -- (112, 488);
  \draw
    (128, 488)
     -- (128, 504)
     -- (128, 520)
     -- (136, 536)
     -- (144, 520)
     -- (144, 504);
  \draw
    (160, 504)
     -- (160, 520)
     -- (168, 536)
     -- (176, 520)
     -- (176, 504);
  \draw
    (192, 488)
     -- (192, 504)
     -- (192, 520)
     -- (200, 536)
     -- (208, 520)
     -- (208, 504);
  \draw
    (224, 488)
     -- (224, 504)
     -- (224, 520)
     -- (232, 536)
     -- (240, 520)
     -- (240, 504);
  \draw
    (256, 488)
     -- (256, 504)
     -- (256, 520)
     -- (264, 536)
     -- (272, 520)
     -- (272, 504);
  \draw
    (304, 488)
     -- (304, 504)
     -- (304, 520)
     -- (312, 536)
     -- (320, 520)
     -- (320, 504)
     -- (320, 488);
  \draw
    (336, 488)
     -- (336, 504)
     -- (336, 520)
     -- (344, 536)
     -- (352, 520)
     -- (352, 504)
     -- (352, 488);
  \draw
    (384, 504)
     -- (384, 520)
     -- (376, 536)
     -- (368, 520)
     -- (368, 504);
  \draw
    (400, 488)
     -- (400, 504)
     -- (400, 520)
     -- (408, 536)
     -- (416, 520)
     -- (416, 504)
     -- (416, 488);
  \draw
    (432, 488)
     -- (432, 504)
     -- (432, 520)
     -- (440, 536)
     -- (448, 520)
     -- (448, 504);
  \draw
    (464, 488)
     -- (464, 504)
     -- (464, 520)
     -- (472, 536)
     -- (480, 520)
     -- (480, 504);
  \draw
    (408, 536)
     -- (440, 552);
  \draw
    (440, 552)
     -- (440, 536);
  \draw
    (440, 552)
     -- (472, 536);
  \draw
    (312, 536)
     -- (344, 552);
  \draw
    (344, 552)
     -- (344, 536);
  \draw
    (344, 552)
     -- (376, 536);
  \draw
    (328, 624)
     -- (360, 640);
  \draw
    (360, 640)
     -- (360, 624);
  \draw
    (360, 640)
     -- (392, 624);
  \draw
    (424, 624)
     -- (456, 640);
  \draw
    (456, 640)
     -- (456, 624);
  \draw
    (456, 640)
     -- (488, 624);
  \draw
    (104, 536)
     -- (136, 552);
  \draw
    (136, 552)
     -- (136, 536);
  \draw
    (136, 552)
     -- (168, 536);
  \draw
    (232, 552)
     -- (200, 536);
  \draw
    (232, 552)
     -- (232, 536);
  \draw
    (232, 552)
     -- (264, 536);
\end{tikzpicture}
}
	\caption{The trees that minimize $\ExpeDProj$ for $n\le 20$. Their values of $\ExpeDProj$ are given in Table \ref{table:minimum_trees}. Roots are drawn atop each tree and each edge should be regarded as oriented away from the root.}
	\label{fig:minimum_trees}
\end{figure}
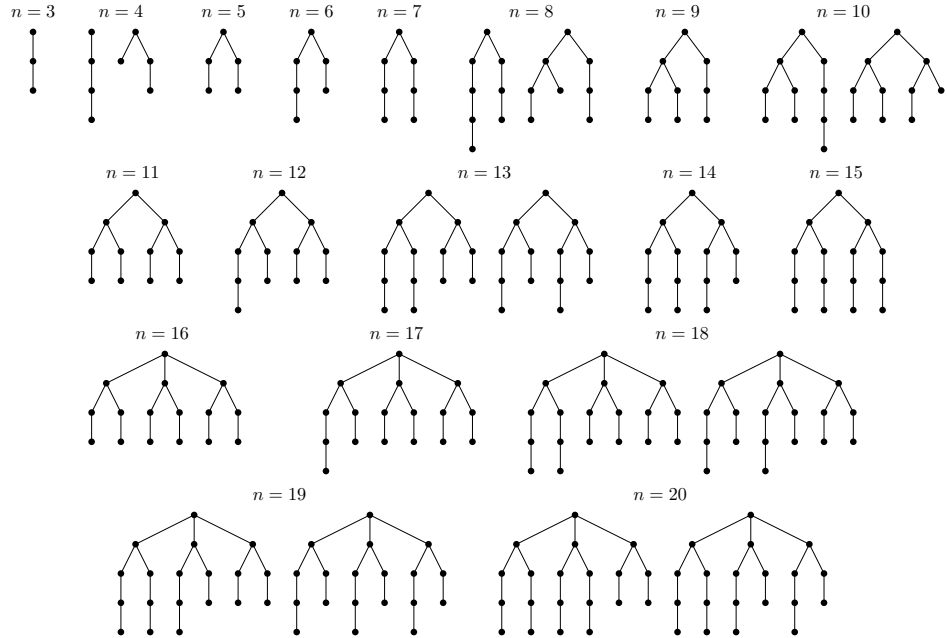

\begin{figure}
	\centering
	\includegraphics[scale=0.69]{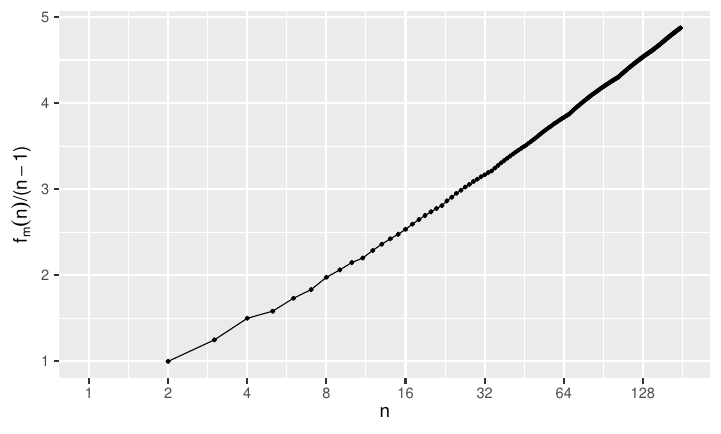}
	\caption{Values of $f_m(n)/(n - 1)$ in log-linear scale for $n\le178$, computed via Algorithm \ref{alg:dynamic_programming}.}
	\label{fig:values_minimum}
\end{figure}
\section{Exact versus approximate calculation of the expected sum of dependency distances}
\label{sec:exp_D:study_corpora}

In previous research the random baseline for $D$ under projectivity has been estimated with a Monte Carlo approximation method\footnote{This Monte Carlo method consists of averaging the values of $D$ obtained via random sampling of $\numsamples$ random projective arrangements of a rooted tree. It is well known that such methods' error decreases as the number of samples increases.} \cite{Gildea2007a,Park2009a, Gildea2010a,Futrell2015a,Kramer2021a} whose cost $\bigO{\numsamples n}$, where $\numsamples$ is the number of random arrangements sampled and $\bigO{n}$ is the cost of generating a random arrangement and computing $D$ on it. Our method, on top of providing exact values, has a much lower complexity $\bigO{n}$ as there is no need for random sampling. We used the {\em UD2.5} \cite{Web_UD} treebanks for Catalan, English and German to measure the relative error of the Monte Carlo approximation method for values of $\numsamples$ used in past research. English and German are selected because they have been used in previous research employing projective random baselines \cite{Gildea2007a,Park2009a,Gildea2010a}. Catalan is included as the native language of the present authors. For each sentence $\Rtree$ in a treebank, we calculated $\ExpeDProj$ in two ways. First, exactly with Equation \ref{eq:ExpeDProj:closed_form}. Second, approximately by averaging the value of $D$ obtained in $\numsamples=10^i$, for $1\le i\le 4$, uniformly random projective arrangements (denoted as $\ExpeDProjApprox$). More precisely, given $\{\arr_i\}_{i=1}^{\numsamples}$ random projective arrangements of a given rooted tree,
\begin{equation}
\ExpeDProjApprox = \frac{1}{\numsamples}\sum_{i=1}^{\numsamples} \D{\Rtree}[\arr_i]
\end{equation}
Using these values we calculated the relative error $\varepsilon_{rel}(\Rtree)$ for every tree as
\begin{equation}
\varepsilon_{rel}(\Rtree) =
    \frac{\ExpeDProjApprox - \ExpeDProj}{\ExpeDProj}
    \label{eq:relative_error}
\end{equation}
$\varepsilon_{rel}(\Rtree) > 0$ indicates that $\ExpeDProjApprox$ overestimates $\ExpeDProj$; $\varepsilon_{rel}(\Rtree) < 0$ indicates underestimation error. Figure \ref{fig:applications:lang_comp:relative_error} shows the  average, minimum, maximum and confidence interval of the relative error as a function of sentence length (see the Appendix for a parallel analysis of the standard deviation of $\VD{\Rtree}$). This figure can be seen as a confirmation of the correctness of Theorem \ref{thm:ExpeDProj} via simulation.

In previous research, the value of $\numsamples$ is sometimes indicated, for example, $\numsamples=10$ \cite{Kramer2021a} and $\numsamples=100$ \cite{Futrell2015a} and sometimes not reported \cite{Park2009a,Gildea2010a}. When $\numsamples=10$, Figure \ref{fig:applications:lang_comp:relative_error} shows that the relative error peaks between $n=10$ and $n=20$ (close to the mean sentence length in English, \citet{Rudnicka2018a}) and tends to decay from then onwards. However, the confidence interval of the relative error clearly broadens for sufficiently large $n$, for example, $n=30$ or $n=40$ onwards. These behaviors smooth out for $\numsamples=100$ while the range of variation and the confidence interval are narrower. As expected, the relative error reduces dramatically for larger $\numsamples$. In sum, Figure \ref{fig:applications:lang_comp:relative_error} indicates that a large $R$ must be used to estimate $\ExpeDProj$ with high numerical precision. However, that increases the computation time. An implication of that figure is that the best solution in terms of numerical precision and speed is provided by Equation \ref{eq:ExpeDProj:closed_form:n_minus_1}.

\begin{figure}
	\centering
	\includegraphics[height=488pt,width=380pt]{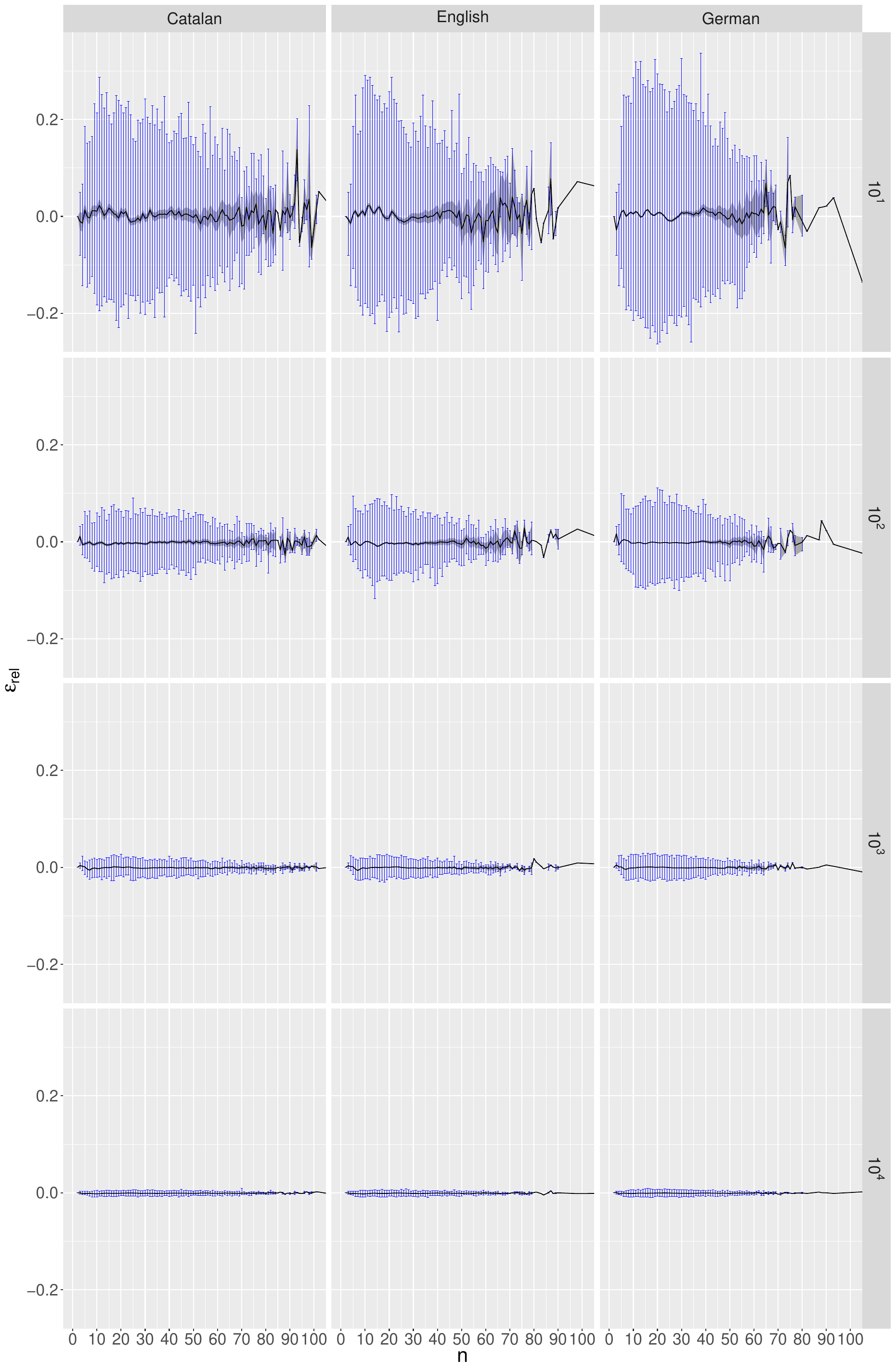}
	\caption{The statistical properties of $\varepsilon_{rel}$, the relative error in the estimation of $\ExpeDProj$ calculated with a Monte Carlo method (Equation \ref{eq:relative_error}), as a function of the size of the tree $n$. For all sentences of the same length, we show the average $\varepsilon_{rel}$ (solid black line), its $99\%$ confidence interval calculated with a bootstrap method (shaded gray region), and the maximum and minimum $\varepsilon_{rel}$ (vertical blue bars). The end of each row indicates $\numsamples$, that is the number of random projective arrangements used to estimate $\ExpeDProj$ (from $\numsamples=10$ for the top row to $\numsamples=10^4$ for the bottom row).}
	\label{fig:applications:lang_comp:relative_error}
\end{figure}
\section{Conclusions and future work}
\label{sec:conclusions}

In this article, we have derived several simple closed-form formulae related to projective arrangements of rooted trees in Section \ref{sec:exp_D}. Firstly, we have given a simple closed-form formula to calculate the amount of {\em different} projective arrangements a rooted tree admits, $\NProj$ (Proposition \ref{prop:NProj}). The proof reveals a straightforward way of enumerating such arrangements for an input rooted tree, and of generating this kind of arrangements uniformly at random without a rejection method. Secondly, and more importantly, we have provided a way of calculating, for any given rooted tree, the expected value of the sum of edge lengths over the space of uniformly random {\em different} projective arrangements, $\ExpeDProj$ (Theorem \ref{thm:ExpeDProj} and Corollary \ref{cor:ExpeDProj:reformulation}). This means that future studies in which such value is calculated approximately via random sampling of arrangements can now be calculated exactly and much faster. The $O(\numsamples n)$ Monte Carlo method to estimate true expectation with an error that tends to zero as $\numsamples$ tends to infinity can now be replaced by our fast $O(n)$ with zero error. Moreover, these formulae can be instantiated in particular classes of trees (as shown in Section \ref{sec:exp_D:tree_classes}). In Section \ref{sec:maxima_minima}, we have characterized the trees that maximize $\ExpeDProj$ and proven that there exists a dynamic programming method to calculate the minima of $\ExpeDProj$. A precise characterization of the minima should be the subject of future research. Finally, in Section \ref{sec:exp_D:study_corpora} we have highlighted the obvious advantages of an exact and fast calculation of $\ExpeDProj$ for future quantitative dependency syntax research.

The present article is a part of a research program on the calculation of random baselines for $D$ via formulae or exact algorithms under formal constraints on linear arrangements that started about two decades ago with the unconstrained case, for example Equation \ref{eq:ExpeDUnconstrained:closed_form} \cite{Zornig1984a,Ferrer2004a}. Here we have covered the projective case. In a forthcoming article, we will focus on planar linearizations of (free) trees -- those in which there are no edge crossings -- and obtain a closed-form formula, and a $\bigO{n}$-time algorithm, to calculate $\ExpeDPlan$, the expected value of $\VD{\Ftree}$ in uniformly random planar arrangements of a given (labeled free) tree $\Ftree$ \cite{Alemany2022b}. In the future, the problem of the calculation of the variance of $\VD{\Rtree}$ in random projective arrangements should also be considered. The analysis of the distribution of dependency distances, for example, their first and second moments of $\VD{\Rtree}$, in random arrangements that are not uniformly random (but still projective and planar), could benefit from applying general-purpose algorithmic frameworks \cite{Eisner2002a,Li2009a,Wang2018a}.

Our research paves the way to investigate the optimality of dependency distances of languages under projectivity. Recently, that optimality has been evaluated in 93 languages from 19 families with the help of a new score, $\Omega(\Ftree)$ that is defined with respect to the minimum and random baseline in unconstrained linear arrangements. $\Omega(\Ftree)$ is defined as \cite{Ferrer2022a}
\begin{equation}
\Omega(\Ftree) = \frac{\expe{\VD{\Ftree}} - \VD{\Ftree}}{\expe{\VD{\Ftree}} - \Dmin}
\end{equation}
where $\Dmin$ is the minimum $\D{\Ftree}$ over all unconstrained linear arrangements $\pi$, known as the Minimum Linear Arrangement problem \cite{Garey1976a,Shiloach1979a,Chung1984a}. However, projectivity is the most widely used constraint to investigate dependency distances and to define the corresponding minimum and random baselines \cite{Futrell2015a,Gulordava2015a,Futrell2020a}. With the result in Theorem \ref{thm:ExpeDProj}, one could replicate the aforementioned study under the projectivity constraint by redefining the score as
\begin{equation}
\Omega_{\mathrm{pr}}(\Rtree) = \frac{\ExpeDProj - \VD{\Rtree}}{\ExpeDProj - \DminProj}
\end{equation}
where the minimum sum of edge lengths under the projectivity constraint is denoted as $\DminProj$ \cite{Hochberg2003a,Gildea2007a}, and is linear-time computable \cite{Alemany2022a}. While $\Omega(\Ftree)\le 1$ holds for any sentence \cite{Ferrer2022a} as it is equivalent to $\VD{\Ftree}\ge\Dmin$, the statement $\Omega_{\mathrm{pr}}(\Rtree)\le 1$ only holds when applied to projective sentences. In other words, $\Omega_{\mathrm{pr}}(\Rtree)\le 1$ needs not hold since we can have that $\VD{\Rtree}>\DminProj$. In the absence of any word order constraint on a sentence, $\Omega$ is expected to be zero while $\Omega_{\mathrm{pr}}(\Rtree)$ is expected to be zero if the only word order constraint is projectivity. Formally, $\expe{\Omega(\Ftree)}=0$ and $\rexpe{\Omega_{\mathrm{pr}}(\Rtree)}=0$. Thanks to our article, such investigation of the optimality of dependency distances can be carried out reducing the computational cost and maximizing numerical precision with respect to an approach based on a Monte Carlo estimation of $\ExpeDProj$.

Here we have focused on $\rexpe{\VD{\Rtree}}$, the expectation of $\VD{\Rtree}$ on random arrangements of an individual tree. Finally, future research should consider the problem of $\expe{\rexpe{D}}$, the expectation of $\rexpe{D}$ on ensembles of random trees of a fixed size $n$. $\expe{\rexpe{D}}$ is indeed the average value of $\ExpeDProj$ among all $n$-vertex rooted trees $\Rtree$. Although there are at least two ensembles possible, that is uniformly random {\em labeled} trees and uniformly random {\em unlabeled} trees, a closed-form formula (or algorithm) to calculate $\expe{\rexpe{D}}$ seems easier to obtain in the former.

\begin{acknowledgments}
We are thankful to the anonymous reviewers for invaluable comments and suggestions. LAP is supported by Secretaria d'Universitats i Recerca de la Generalitat de Catalunya and the Social European Fund. RFC is also supported by the recognition 2017SGR-856 (MACDA) from AGAUR (Generalitat de Catalunya). RFC and LAP are supported by the grant TIN2017-89244-R from MINECO (Ministerio de Econom\'ia, Industria y Competitividad).
\end{acknowledgments}

\appendix
\appendixsection{The standard deviation of $\VD{\Rtree}$ in projective arrangements}
\label{sec:appendix:standard_deviation}

Besides estimating the relative error of a Monte Carlo method to approximate $\ExpeDProj$, we also calculated the standard deviation $\sigma(\Rtree)$ of $\D{\Rtree}[\arr]$ over $R$ random projective arrangements $\arr$ via
\begin{equation}
\label{eq:standard_deviation}
\sigma^2(\Rtree) =
	\frac{1}{\numsamples - 1}
	\sum_{i=1}^{\numsamples}
	\left(
		\D{\Rtree}[\arr_i] - \ExpeDProj
	\right)^2
\end{equation}
Notice that $\sigma$ is calculated using $\ExpeDProj$ and not $\ExpeDProjApprox$. Figure \ref{fig:applications:lang_comp:variance} shows that $\sigma$ increases with $n$ (as expected) but that it does not decrease as $\numsamples$ increases; furthermore, the apparent linear trend for sufficiently large $n$ suggests an $\bigO{n}$ growth of $\sigma$ to be confirmed in future research.

\begin{figure}
	\centering
	\includegraphics[height=488pt,width=380pt]{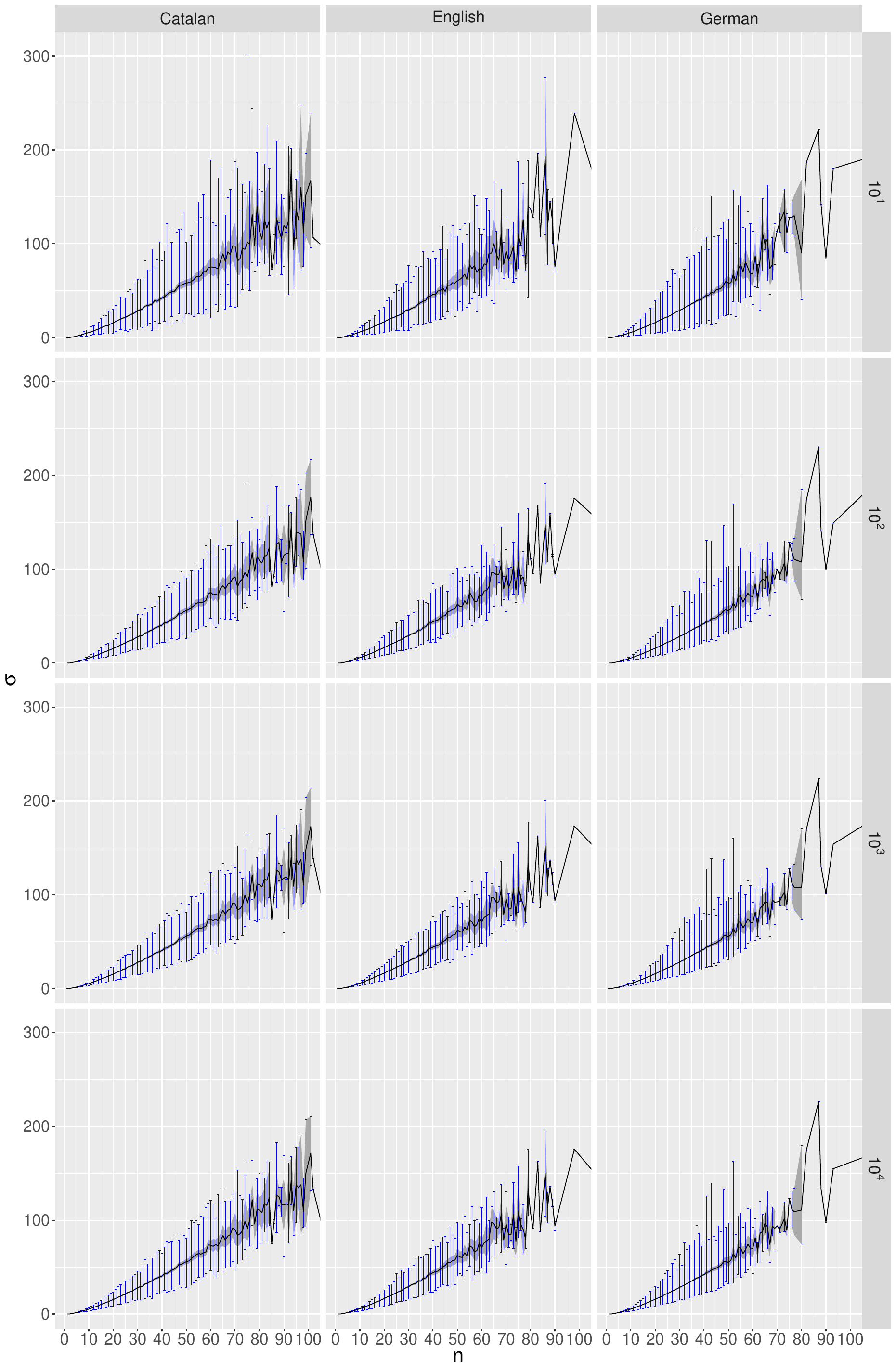}
	\caption{The statistical properties of $\sigma$, the standard deviation in the estimation of $\ExpeDProj$ calculated with a Monte Carlo method (Equation \ref{eq:standard_deviation}), as a function of the size of the tree $n$. This figure has the same format as Figure \ref{fig:applications:lang_comp:relative_error}. For all sentences of the same length, we show the average $\sigma$ (solid black line), its $99\%$ confidence interval calculated with a bootstrap method (shaded gray region), and the maximum and minimum $\varepsilon_{rel}$ (vertical blue bars). The end of each row indicates $\numsamples$, that is the number of random projective arrangements used to estimate $\ExpeDProj$ (from $\numsamples=10$ for the top row to $\numsamples=10^4$ for the bottom row).}
	\label{fig:applications:lang_comp:variance}
\end{figure}

\starttwocolumn

\end{document}